\def\trackingLevel{0}
\newcommand{\changing}[1]{{\ifnumcomp{\trackingLevel}{>}{0}{\protect\color{magenta}}{}\ifnumcomp{\trackingLevel}{=}{2}{\uwave{#1}}{#1}}}
\newcommand{\adding}[1]{{\ifnumcomp{\trackingLevel}{>}{0}{\protect\color{blue}}{}\ifnumcomp{\trackingLevel}{=}{2}{\uwave{#1}}{#1}}}
\newcommand{\removing}[1]{\ifnumcomp{\trackingLevel}{=}{2}{{\protect\color{red}\sout{#1}}}{}}
\newcommand{\prob}[1]{{\sc #1}}
\def\sat{\prob{SAT}}
\def\csp{\prob{CSP}}
\def\twosat{\prob{2-SAT}}
\def\hornsat{\prob{HornSAT}}
\def\stbackdoor{\prob{Strong $T$-Backdoor}}
\def\hset{\prob{Hitting Set}}
\def\vcover{\prob{Vertex Cover}}
\def\kclique{\prob{$k$-Clique}}
\begin{document}

\mainmatter

\title{On Backdoors To Tractable Constraint Languages\thanks{supported by ANR Project ANR-10-BLAN-0210.}}

\author{Clement Carbonnel\inst{1,3} \and Martin C. Cooper\inst{2} \and Emmanuel Hebrard\inst{1}}
\institute{CNRS, LAAS, 7 avenue du colonel Roche, F-31400 Toulouse, France\\
\email{\{carbonnel,hebrard\}@laas.fr}
\and IRIT, University of Toulouse III, 31062 Toulouse, France\\
\email{cooper@irit.fr}
\and  University of Toulouse, INP Toulouse, LAAS, F-31400 Toulouse, France
}

\maketitle

\begin{abstract}
In the context of \csp s, a strong backdoor is a subset of variables such that every complete assignment yields a residual instance guaranteed to have a specified property. If the property allows efficient solving, then a small strong backdoor provides a reasonable decomposition of the original instance into easy instances.\
An important challenge is the design of algorithms that can find quickly a small strong backdoor if one exists. We present a systematic study of the parameterized complexity of backdoor detection when the target property is a restricted type of constraint language defined by means of a family of polymorphisms. In particular, we show that under the weak assumption that the polymorphisms are idempotent, the problem is unlikely to be FPT when the parameter is either $r$ (the constraint arity) or $k$ (the size of the backdoor) unless P = NP or FPT = W[2]. When the parameter is $k+r$, however, we are able to identify large classes of languages for which the problem of finding a small backdoor is FPT.
\end{abstract}

\section{Introduction}
\label{sec:intro}

%[MOTIVATION]
Unless P=NP, the constraint satisfaction problem (\csp) is in general intractable. However, one can empirically observe that solution methods scale well beyond what a worst-case complexity analysis would suggest.

In order to explain this gap, Williams, Gomes and Selman introduced the notion of \emph{backdoor}~\cite{backdoors}. A strong backdoor is a set of variables whose complete assignments all yield an easy residual problem. When it is small, it therefore corresponds to a weak spot of the problem through which it can be attacked. Indeed, by branching first on the variables of a backdoor of size $k$, we ensure that the depth of the search tree is bounded by $k$. There exists a similar notion of weak backdoor, ensuring that at least one assignment yields an easy problem, however, we shall focus on strong backdoors and omit the adjective ``strong''.
% related notion of weak backdoor 

%[PROBLEM: THE COMPLEXITY OF FINDING BACKDOORS]
Finding small backdoors is then extremely valuable in order to efficiently solve constraint problems, however, it is very likely to be itself intractable. In order to study the computational complexity of this problem, we usually consider backdoors with respect to a given tractable class $T$, i.e., such that all residual problems fall into the class $T$.
In Boolean Satisfiability (\sat), it was shown that finding a minimum backdoor with respect to \hornsat \changing{, }\twosat\ \adding{and their disjunction }is fixed-parameter tractable with respect to the backdoor size~\cite{conf/sat/NishimuraRS04}\cite{Gaspers}. It is significantly harder, however, to do so with respect to bounded treewidth formulas~\cite{conf/focs/GaspersS13}. In this paper we study the computational complexity of finding a strong backdoor to a \emph{semantic} tractable class of \csp , in the same spirit as a very recent work by Gaspers et al.~\cite{Gaspers}. Assuming 
%(as we do throughout this paper) 
that P $\neq$ NP, semantic tractable classes are characterized by unions and intersections of languages of constraints closed by some operations. We make the following three
%four 
main contributions:

%[CONTRIBUTIONS]
\begin{itemize}

\item We first consider the case where these operations are \emph{idempotent}, and show that computing a $k$-backdoor with respect to such a class is NP-hard even on bounded arity \csp s, and W[2]-hard for the parameter $k$ if the arity is not bounded. Observe that the scope of this result is extremely wide, as most tractable \changing{classes} of interest are idempotent.

\item Then, we characterize another large category of tractable classes, that we call \emph{Helly}, and for which finding $k$-backdoors is fixed-parameter tractable in $k+r$ where $r$ is the maximum arity of a constraint. 

\item Lastly, we show that finding $k$-backdoors with respect to many semantic tractable classes that are not Helly is W[2]-hard for $k+r$ (and remains W[2]-hard for $k$ if $r$ is fixed). However, we do not \changing{prove} a strict dichotomy since a few other conditions must be met besides not being Helly.

%\item \adding{Lastly, we study the classes induced by constant operations, which are not idempotent, and prove that finding a $k$-backdoor to these classes is in $P$. This highlights the fact that there exist nondegenerate (though admittedly not very large) tractable classes for which the backdoor problem is easy.}
\end{itemize}

%[PLAN]
The paper is organized as follows: After introducing the necessary technical background in Section~\ref{sec:preli}, we study idempotent tractable classes in Section~\ref{sec:idem} and Helly classes as well as a family of non-Helly classes in Section~\ref{sec:helly}. 
%\adding{Finally, we describe in Section~\ref{sec:cons} a polynomial-time algorithm that finds the minimum-size strong backdoor to the class of constant-closed constraints.}

\section{Preliminaries}
\label{sec:preli}

\textbf{Constraint satisfaction problems} A constraint satisfaction problem (\csp) is a triplet $(X,D,C)$ where $X$ is a set of variables, $D$ is a domain of values, and $C$ is a set of constraints. For simplicity, we assume $D$ to be a finite subset of $\mathbb{N}$. A constraint is a pair $(S,R)$ where $S \subseteq X$ is the scope of the constraint and $R$ is an $|S|$-ary relation on $D$, i.e. a subset of $D^{|S|}$ representing the possible assignments to $S$. A solution is an assignment \changing{$X \rightarrow D$} that satisfies every constraint. The goal is to decide whether a solution exists.\

A \textit{constraint language} is a set of relations. The domain of a constraint language $\Gamma$ is denoted by $D(\Gamma)$ and contains all the values that appear in the tuples of the relations in $\Gamma$. Given a constraint language $\Gamma$, \csp($\Gamma$) is the restriction of the generic \csp\ to instances whose constraints are relations from $\Gamma$. The \removing{famous }\textit{Dichotomy Conjecture} by Feder and Vardi says that for every \adding{finite} $\Gamma$, \csp($\Gamma$) is either in P or NP-complete~\cite{feder1998computational}. Since the conjecture is still open, the complexity of constraint languages is a very active research area (see e.g.~\cite{Bulatov:2011:CCC:1970398.1970400}\cite{journals/ijac/BartoB13}\cite{Barto:2014:CSP:2578041.2556646}).\

It is known that the complexity of a language is determined by its set of closure operations~\cite{closure}. Specifically, an operation $f: D(\Gamma)^a \rightarrow D(\Gamma)$ of arity $a$ is a \textit{polymorphism} of $\Gamma$ if for every $R \in \Gamma$ of arity $r$ and $t_1,\ldots,t_a \in R$, $f(t_1,\ldots,t_a) = (f(t_1[1],\ldots,t_a[1]),\ldots,f(t_1[r],\ldots,t_a[r])) \in R$. A polymorphism $f$ is \textit{idempotent} if $\forall x \in D, f(x,x,\ldots,x) = x$. We denote by Pol($\Gamma$) (resp. IdPol($\Gamma$)) the set of all polymorphisms (resp. idempotent polymorphisms) of $\Gamma$. Given two languages $\Gamma_1,\Gamma_2$ with $D(\Gamma_2) \subseteq D(\Gamma_1)$, we write Pol($\Gamma_1$) $\subseteq$ Pol($\Gamma_2$) if the restriction to $D(\Gamma_2)$ of every $f \in$ Pol($\Gamma_1$) is in Pol($\Gamma_2$). A wide range of operations have been shown to induce polynomial-time solvability of any language they preserve: these include near-unanimity operations~\cite{Jeavons98consistency}, edges~\cite{idziak2010tractability}, semilattices~\cite{closure}, 2-semilattices~\cite{2sml} and totally symmetric operations of all arities~\cite{dalmau1999closure}.\\

\noindent \textbf{Composite classes} A semantic class is a set of languages. A semantic class $T$ is tractable if \csp($\Gamma$) $\in P$ for every $\Gamma \in T$, and recognizable in polynomial time if the membership problem `Does $\Gamma \in T$?' is in $P$. We say that a semantic class $T$ is \textit{atomic} if there exists an operation $f: \mathbb{N}^a \rightarrow \mathbb{N}$ such that $\Gamma \in T$ if and only if $f_{|D(\Gamma)} \in$ Pol$(\Gamma)$, where $f_{|D(\Gamma)}$ denotes the restriction of $f$ to $D(\Gamma)$. We sometimes denote such a class by $T_f$ and say that $f$ \textit{induces} $T_f$. We call a semantic class $T$ \textit{simple} if there exists a set $\mathcal{T}$ of atomic classes such that $T = \cap_{T_f \in \mathcal{T}} T_f$. Finally, a semantic class $T$ is \textit{composite} if there exists a set $\mathcal{T}$ of simple classes such that $T = \cup_{T_s \in \mathcal{T}} T_s$. In both cases, the set $\mathcal{T}$ is allowed to be infinite. Using the distributivity of intersection over union, it is easy to see that any class derived from atomic classes through any combination of intersections and unions is composite. We say that an atomic class $T_f$ is \textit{idempotent} if $f$ is idempotent. By extension, a composite class is idempotent if can be obtained by intersections and unions of idempotent atomic classes.\

\begin{example}
\label{exm:sml}
Consider the class of max-closed constraints, introduced in \cite{Jeavons95tractableconstraints}. This class is tractable as any CSP instance over a max-closed constraint language can be solved by establishing (generalised) arc-consistency. Using our terminology, this class is exactly the atomic class induced by the operation $\max(.,.)$, and thus it is composite. Max-closed constraints have been generalized to any language that admits a semilattice polymorphism, i.e. a binary operation $f$ such that $f(x,x) = x$, $f(x,y) = f(y,x)$ and $f(f(x,y),z) = f(x,f(y,z))$ for any $x,y,z \in D$~\cite{closure}. If we denote by Sml the set of all possible semilattice operations on $\mathbb{N}$, this larger class corresponds to $\cup_{f \in \text{Sml}} T_f$, which is composite but not atomic.
\end{example}

%\begin{example}
%An operation $f$ is totally symmetric if $f(x_1,...,x_a) = f(y_1,...,y_a)$ whenever $\{x_1,...,x_a\} = \{y_1,...,y_a\}$. It has been shown in \cite{symm} that \csp($\Gamma$) is solved by arc-consistency if and only if $\Gamma$ has totally symmetric polymorphisms of all arities. If we denote by TS(a) the set of all possible totally symmetric operations on $\mathbb{N}$ of arity $a$ and ATS = $\prod_{a \in \mathbb{N}^*}TS(a)$, this class can be stated as
%$$\cup_{\mathcal{F} \in \text{ATS}} \left( \cap_{f \in \mathcal{F}} T_f \right)$$
%and thus is also a composite class.
%\end{example}

\begin{example}
\label{exm:bw}
\adding{For a given language $\Gamma$, let $\overline{\Gamma}$ be the language obtained from $\Gamma$ by adding all possible unary relations over $D(\Gamma)$ with a single tuple.} Consider the \changing{very large} class $T_{BW}$ of languages $\Gamma$ such that \changing{\csp($\overline{\Gamma}$) (and thus CSP($\Gamma$))} can be solved by achieving $k$-consistency for some $k$ that only depends on $\Gamma$.
This property is equivalent to the existence of two idempotent polymorphisms $f$ and $g$ such that for every $x,y \in D(\Gamma)$~\cite{barto2014constraint}\cite{kozik2013characterizations},
%An much larger tractable class is that of languages $\Gamma$ such that \csp($\Gamma$) can be solved by establishing $k$-consistency, for some $k$ that only depends on $\Gamma$. We call this class $T_{BW}$. 
%It has been shown~\cite{BulinDJN13}\cite{ptipoly} that this property is equivalent to the existence of two \adding{idempotent} polymorphisms $f$ and $g$ such that for every $x,y \in D(\Gamma)$,
\begin{align*}
(i) & ~~ g(y,x,x,x) = g(x,y,x,x) = g(x,x,y,x) = g(x,x,x,y)\\
(ii) & ~~ f(y,x,x) = f(x,y,x) = f(x,x,y)\\
(iii) & ~~ f(x,x,y) = g(x,x,x,y)
\end{align*}
If we denote by FGBW the set of all pairs of operations $(f,g)$ on $\mathbb{N}$ satisfying these three conditions, the class $T_{BW}$ is composite and idempotent \changing{since it can be written as $T_{BW} = \cup_{(f,g) \in \text{FGBW}} (T_f \cap T_g)$.}
%and thus is composite.
\end{example}

The choice to study composite classes shows multiple advantages. First, they are general enough to capture most natural semantic tractable classes defined in the literature, and they also allow us to group together tractable languages that are solved by the same algorithm (such as arc consistency or Gaussian elimination).\removing{ However, they do not allow restrictions on the arity of the constraints.} %, and one cannot forbid the existence of some polymorphism (e.g., ``$\Gamma$ has a semilattice but is \textit{not} max-closed"). 
Second, membership in these classes is hereditary: if $\Gamma \in T$ and Pol($\Gamma$) $\subseteq$ Pol($\Gamma'$), then $\Gamma' \in T$. In particular, any sublanguage of a language in $T$ is in $T$, and every composite class contains the empty language.

\medskip

\noindent \textbf{Strong backdoors} Given an instance $(X,D,C)$ of \csp($\Gamma$), assigning a variable $x \in X$ to a value $d \in D$ is done by removing the tuples inconsistent with $x \gets d$ from the constraints whose scope include $x$, and then removing the variable $x$ from the instance (thus effectively reducing the arity of the neighbouring constraints by one). A \textit{strong backdoor} to a semantic class $T$ is a subset $S \subseteq X$ such that every complete assignment of the variables from $S$ yields an instance whose language is in $T$. Note that assigning a variable involves no further inference (e.g., arc consistency); indeed doing so has been shown to make backdoors potentially much harder to detect~\cite{bcsat}. There exist alternative forms of backdoors, such as weak backdoors~\cite{backdoors} and partition backdoors~\cite{partition}, but we only consider strong backdoors throughout this paper so we may omit the word ``strong" in proofs. The goal of this work is to study how the properties of the target semantic class $T$ affect the (parameterized) complexity of the following problem.

\medskip
\noindent \stbackdoor: Given a \csp\ instance $I$ and an integer $k$, does $I$ have a strong backdoor to $T$ of size at most $k$?
\medskip

\noindent \textbf{Parameterized complexity} A problem is \emph{parameterized} if each instance $x$ is coupled with a nonnegative integer $k$ called the \emph{parameter}. A parameterized problem is \emph{fixed-parameter tractable} (FPT) if it can be solved in time $O(f(k)|x|^{O(1)})$, where $f$ is any computable function. For instance, \vcover\ parameterized with the size $k$ of the cover is FPT as it can be solved in time $O(1.2738^k + kn)$~\cite{vcovfpt}, where $n$ is the number of vertices of the input graph. The class XP contains the parameterized problems that can be solved in time $O(f(k)|x|^{g(k)})$ for some computable functions $(f,g)$. FPT is known to be a proper subset of XP~\cite{ParameterizedComplexity}. Between these extremes lies the \removing{so-called }\emph{Weft Hierarchy}:
$$\text{FPT} = \text{W[0]} \subseteq \text{W[1]} \subseteq \text{W[2]} \subseteq \ldots \subseteq \text{XP}$$
where for every $t$, W[$t+1$] is believed to be strictly larger than W[$t$]. These classes are closed under FPT-reductions, which map an instance $(x,k)$ of a problem $L_1$ to an instance $(x',k')$ of a problem $L_2$ such that:

\begin{itemize}
\item $(x',k')$ can be built in time $O(f(k)|x|^{O(1)})$ for some computable function $f$
\item $(x',k')$ is a yes-instance if and only if $(x,k)$ is
\item $k' \leq g(k)$ for some computable function $g$
\end{itemize}

For instance, \kclique\ is W[1]-complete \cite{ParameterizedComplexity} when the parameter is $k$. Note that if considering the parameter as a constant yields an NP-hard problem, the parameterized version is not in XP (and thus not FPT) unless P = NP.

\section{General Hardness}
\label{sec:idem}

%The two parameters we will examine are 
We consider two parameters: $k$, the size of the backdoor, and $r$, the maximum arity of the constraint network. Under the very weak assumption that $T$ is composite and idempotent, we show that \stbackdoor\ is unlikely to be FPT for either of the parameters taken separately, 
%unless P = NP or FPT = W[2].
assuming that P $\neq$ NP and FPT $\neq$ W[2], as we shall do througout the paper.
In both cases, we show that our results 
%easily
extend to the class of Boolean \csp s
%.\
with minor modifications.\

Our hardness results will be obtained by reductions from various forms of the $p$-\hset\ problem: given a universe $U$, a collection $S = \{S_i \mid i=1..n\}$ of subsets of $U$ with $|S_i| = p$ and an integer $k$, does there exist a subset $H \subseteq U$ such that $|H| \leq k$ and $\forall i$, $H \cap S_i \neq \emptyset$? This problem is NP-complete for every fixed $p \geq 2$~\cite{karp}, and W[2]-complete when the parameter is $k$ and $p$ is unbounded~\cite{ParameterizedComplexity}. The special case $p=2$ is called \vcover, and the input is typically given in the form of a graph $G=(U,S)$ and an integer $k$.\

We will make use of two elementary properties of idempotent composite classes. First, any relation with a single tuple is closed by every idempotent operation. 
%\changing{If a language $\Gamma$ belongs to an indempotent class, then $\Gamma$ }
Thus, adding such a relation to a language does not affect its membership in idempotent classes. 
%\changing{In other words, if a single tuple relation belongs to all idempotent languages.}
The second property is slightly more general. Given a relation $R$ of arity $r$, let $M_R$ be the matrix whose rows are the tuples of $R$ sorted by lexicographic order (so that $M_R$ is unique). We say that a relation $R$ is an \textit{extension} of a relation $R'$ if $M_R$ has all the columns of $M_{R'}$, plus extra columns that are either constant (i.e. every value in that column is the same) or copies of some columns of $M_{R'}$. In that case, since IdPol($\{R\}$) = IdPol($\{R'\}$), $\{R\} \in T$ if and only if $\{R'\} \in T$, for every idempotent composite class $T$.\

For the rest of the document, we represent relations as lists of tuples delimited by square brackets (e.g. $R = [t_1,\ldots,t_n]$), while tuples are delimited by parentheses (e.g. $t_1 = (d_1,\ldots,d_r)$).

\subsection{Hardness on bounded arity \csp s}
\label{sct:arity}

\begin{theorem}
\label{thm:binhard}
\stbackdoor\ is NP-hard for every idempotent composite tractable class $T$, even for binary \csp s.
\end{theorem}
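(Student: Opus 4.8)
The plan is to reduce from \vcover, which is NP-complete~\cite{karp}. Given a graph $G=(U,E)$ and an integer $k$, I would build a binary \csp\ instance $I_G$ over a carefully chosen finite domain $D$: the variables are the vertices $U$, and for each edge $\{u,v\}\in E$ there is a single binary constraint $(\{u,v\},B)$, where $B$ is a fixed binary relation over $D$ specified below. The map $(G,k)\mapsto(I_G,k)$ is clearly polynomial-time computable and leaves the parameter unchanged, so it suffices to arrange that $I_G$ has a strong backdoor to $T$ of size at most $k$ if and only if $G$ has a vertex cover of size at most $k$.

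The backward direction is easy regardless of $B$: if $S\subseteq U$ is not a vertex cover then some edge $\{u,v\}$ has $u,v\notin S$, and for every complete assignment of $S$ the constraint on $\{u,v\}$ is untouched, so the residual language has $\{B\}$ as a sublanguage and therefore is not in $T$ as soon as $\{B\}\notin T$ (recall composite classes are closed under sublanguages). For the forward direction, if $S$ is a vertex cover then for every complete assignment of $S$ and every edge the corresponding constraint is reduced by at least one of its two variables, so the reduced relations are unary or nullary and I need them to always form a language in $T$. The clean way to guarantee this is to take $B$ to be the graph $B_\pi=\{(d,\pi(d)):d\in D\}$ of a permutation $\pi$ of $D$: then every reduced relation is either a single-tuple unary relation $\{(d)\}$ or a trivial nullary relation, so the residual language is always a sublanguage of $\{\,\{(d)\}:d\in D\,\}$ augmented with nullary relations (which no operation can obstruct), and it only remains to know that $\{\,\{(d)\}:d\in D\,\}$ lies in $T$.

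Thus the two properties I need from $(D,\pi)$ are $(i)$ $\{\,\{(d)\}:d\in D\,\}\in T$ and $(ii)$ $\{B_\pi\}\notin T$. Property $(i)$ follows, using idempotence, whenever $D$ is a common subuniverse of the operations defining some simple class contained in $T$ — in particular one may take $D=D(\Gamma)$ for any $\Gamma\in T$ whose domain has size at least $2$. Property $(ii)$ is where the real work lies, and I expect this to be the main obstacle: one must exploit tractability of $T$ (so, assuming $\mathrm{P}\neq\mathrm{NP}$, for instance $\{\neq_m\}\notin T$ for every $m\ge 3$, since otherwise $m$-colouring would be in $\mathrm{P}$) to produce a subuniverse $D$ and a permutation $\pi$ of it whose graph lies outside \emph{every} simple class whose union is $T$, not merely one of them. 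Note that the naive choice $D=\{0,1,2\}$, $B=\neq_3$ does give $\{B\}\notin T$ but breaks the forward direction, because the reduced relations become the co-singletons of $\{0,1,2\}$, whose preservation requires a conservative polymorphism that an arbitrary tractable $T$ need not possess (e.g.\ a semilattice operation with an absorbing element); replacing inequality by a suitably chosen permutation graph is exactly what repairs this. Once $(D,\pi)$ are in hand, the reduction and the verification above go through unchanged, and since \vcover\ is already NP-complete this shows that \stbackdoor\ is NP-hard even on binary \csp s.
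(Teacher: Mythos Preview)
Your reduction shape (one variable per vertex, binary constraints on edges, same parameter) matches the paper's, and your analysis of the two directions is correct \emph{given} a relation $B$ with $\{B\}\notin T$ whose one-variable restrictions are all single-tuple. The genuine gap is property~$(ii)$: you propose $B=B_\pi$ the graph of a permutation and defer the construction of $\pi$, but for some idempotent tractable composite $T$ no such $\pi$ exists. Take $T=T_f$ with $f$ the dual-discriminator majority $f(x,y,z)=y$ if $y=z$, else $x$ (the 0/1/all class of Example~\ref{exm:helly}). For any permutation $\pi$ and any three tuples $(a_i,\pi(a_i))\in B_\pi$, applying $f$ coordinatewise again yields a tuple of $B_\pi$, because $\pi$ is injective and so $\pi(a_2)=\pi(a_3)\Leftrightarrow a_2=a_3$. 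Hence every permutation graph lies in $T_f$, and your scheme can never produce $\{B_\pi\}\notin T$. More generally, any operation that depends only on the pattern of equalities among its arguments commutes with every permutation, so this obstruction is not isolated.

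The paper circumvents this by a case split on exactly the object you flagged as problematic. Let $\Gamma=\{[(1),(2)],[(2),(3)],[(1),(3)]\}$ be the co-singleton language over $\{1,2,3\}$. If $\Gamma\in T$, put $\neq_{1,2,3}$ on every edge: the residuals are sublanguages of $\Gamma$, hence in $T$ by heredity, while $\{\neq_{1,2,3}\}\notin T$ by tractability of $T$. If $\Gamma\notin T$, place \emph{three} binary constraints on each edge, namely the diagonal extensions $R_1=[(1,1),(2,2)]$, $R_2=[(2,2),(3,3)]$, $R_3=[(1,1),(3,3)]$; the language $\{R_1,R_2,R_3\}$ is an extension of $\Gamma$ and therefore outside $T$ by idempotency, while each $R_i$ has its two tuples differing in both coordinates, so assigning either endpoint leaves at most one tuple and idempotency again gives membership in $T$. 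The two ideas you are missing are thus: allow several constraints per edge (so that the \emph{language}, not a single relation, is forced outside $T$), and branch on whether the co-singletons already lie in $T$ rather than trying to avoid them altogether.
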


\begin{proof}
We reduce from \vcover. Let $I = (G,k)$ be an instance of \vcover. We consider two cases. First, suppose that $\Gamma$ $= \{[(1),(2)]$, $[(2),(3)]$, $[(1),(3)]\} \in T$. We create a \csp\ with one variable per vertex in $G$, and if two variables correspond to adjacent vertices we add the constraint $\neq_{1,2,3}$ (inequality over the $3$-element domain) between them. Since CSP($\{\neq_{1,2,3}\}$) is NP-hard and $T$ is tractable, a valid backdoor of size at most $k$ must correspond to a vertex cover on $G$. Conversely, the variables corresponding to a vertex cover form a backdoor: after every complete assignment to these variables, the language of the reduced instance is a subset of $\Gamma$, which is in $T$ since $T$ is composite and hence hereditary.\
Now, suppose that $\Gamma \notin T$. We duplicate the column of each relation in $\Gamma$ to obtain the binary language $\Gamma' = \{R_1,R_2,R_3\}$ with $R_1 = [(1,1),(2,2)]$, $R_2 = [(2,2),(3,3)]$ and $R_3 = [(1,1),(3,3)]$. Since $\Gamma'$ is an extension of $\Gamma$ and $T$ is idempotent, $\Gamma'$ is not in $T$. Then, we follow the same reduction as in the first case, except that we add the three constraints $R_1,R_2,R_3$ instead of $\neq_{1,2,3}$ between two variables associated with adjacent vertices. By construction, a backdoor must be a vertex cover. Conversely, if we have a vertex cover, after any assignment of the corresponding variables we are left with at most one tuple per constraint, and the resulting language is in $T$ by idempotency.\qed
\end{proof}

In the case of Boolean \csp s, Theorem \ref{thm:binhard} cannot apply verbatim. This is due to the fact that every binary Boolean language is a special case of \twosat\ and is therefore tractable. Thus, a binary Boolean \csp\ has always a backdoor of size 0 to any class that is large enough to contain \twosat, and the minimum backdoor problem is trivial. The next proposition shows that this is the only case for which \stbackdoor\ is not NP-hard under the idempotency condition. Note that looking for a strong backdoor in a binary Boolean \csp\ has no practical interest; however this case is considered for completeness.

\begin{proposition}
On Boolean \csp s with arity at most $r$, \stbackdoor\ is NP-hard for every idempotent composite tractable class $T$ if $r \geq 3$. For $r = 2$, \stbackdoor\ is either trivial (if every binary Boolean language is in $T$) or NP-hard.
\end{proposition}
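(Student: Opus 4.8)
The plan is to follow the template of the proof of Theorem~\ref{thm:binhard}: reduce from \vcover\ or from $3$-\hset\ by placing on every edge (or triple) a fixed ``forbidden'' language that is not in $T$, so that any backdoor must touch every edge, and then checking that touching an edge collapses the forbidden language into relations that $T$ \emph{does} contain. Two standing facts from just before Theorem~\ref{thm:binhard} are used throughout: single-tuple relations are preserved by every idempotent operation (so adjoining them to a language cannot affect membership in an idempotent composite class), and every composite class contains the empty language and hence all nullary relations.

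\smallskip\noindent\emph{The case $r\ge 3$.} Since $T$ is tractable and \csp($\{N_3\}$) with $N_3=[(1,0,0),(0,1,0),(0,0,1)]$ is the NP-hard problem ``positive $1$-in-$3$ satisfiability'', we have $\{N_3\}\notin T$. The crucial feature of $N_3$ is that fixing any single coordinate yields either the single-tuple relation $[(0,0)]$ or the disequality relation $D:=[(0,1),(1,0)]$, and fixing two or three coordinates yields only single-tuple or nullary relations. I would then split on whether $\{D\}\in T$. If $\{D\}\in T$, reduce from $3$-\hset: one variable per element of the universe and an $N_3$-constraint on each triple. A hitting set of size $\le k$ is a backdoor because after any assignment of its variables every triple-constraint becomes a relation obtained from $N_3$ by fixing $\ge 1$ coordinate, so the residual language is a sublanguage of $\{D\}$ enriched only by single-tuple and nullary relations, hence in $T$; conversely a backdoor of size $\le k$ that missed some triple would leave $\{N_3\}\notin T$ in the residual language. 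If $\{D\}\notin T$, reduce from \vcover\ by putting $D$ on every edge: touching an edge leaves a single-tuple relation (in $T$), an untouched edge leaves $\{D\}\notin T$, so backdoors of size $\le k$ are exactly the vertex covers of size $\le k$. Either way \stbackdoor\ is NP-hard.

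\smallskip\noindent\emph{The case $r=2$.} If every binary Boolean language is in $T$, then by heredity the language of any binary Boolean \csp\ is already in $T$, the empty set is a backdoor, and \stbackdoor\ is trivially a yes-instance. Otherwise I again split on whether the full unary relation $U:=[(0),(1)]$ is in $T$. If $U\in T$, reduce from \vcover\ placing on each edge the fixed binary Boolean language $\Gamma_0=\{\,[(0,1),(1,0),(1,1)],\,[(0,0),(0,1),(1,0)]\,\}$; by Schaefer's classification of tractable Boolean constraint languages this $\Gamma_0$ lies outside every tractable idempotent composite class as soon as not every binary Boolean language is in $T$. Touching an edge collapses each constraint of $\Gamma_0$ to a unary relation among $[\,]$, $[(0)]$, $[(1)]$, $U$ (all in $T$), while an untouched edge leaves $\Gamma_0\notin T$, so backdoors of size $\le k$ are precisely the vertex covers of size $\le k$. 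If $U\notin T$, a short argument shows that $T$ contains no Boolean language having a non-constant column (any operation witnessing membership of such a language must map $\{0,1\}$ into $\{0,1\}$, and every Boolean operation preserves $U$); in particular $\{D\}\notin T$, and we conclude exactly as in the $r\ge 3$ subcase above, reducing from \vcover\ with disequality constraints.

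\smallskip\noindent\emph{Main obstacle.} The NP-hardness source is free once $T$ is tractable; the real work, which does not arise in Theorem~\ref{thm:binhard}, is keeping the \emph{residual} language inside $T$ after a covering assignment. Over an unbounded domain the leftovers were single-tuple relations, hence harmless by idempotency, but over $\{0,1\}$ a constraint may collapse to the full unary relation $U=[(0),(1)]$, which need not belong to $T$. This is precisely what forces the case distinction on $U\in T$ (equivalently, on whether $T$ restricted to Boolean languages is just the class of single-tuple relations), and one also needs the structure of tractable Boolean constraint languages (Schaefer's theorem / Post's lattice) to certify that the chosen edge-gadget really is outside $T$ in each branch. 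Getting this bookkeeping right --- while keeping all gadget arities $\le r$ and the parameter equal to $k$ --- is the crux.
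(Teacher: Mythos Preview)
Your approach is the paper's: case-split on which small Boolean languages lie in $T$, then reduce from \vcover\ or $3$-\hset\ with an appropriate edge/triple gadget. For $r\ge 3$ your split on $\{D\}\in T$ and your gadgets $N_3$ and $D$ line up with the paper's $R_1=D$, $R_2=N_3$, $R_3$ (an extension of $D$) essentially verbatim; your case split on $U\in T$ is equivalent to the paper's split on $\Gamma_1=\{[(0,1)]\}\in T$, both being equivalent to ``$\{0,1\}$ is closed under the defining operations''.

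Where you diverge is the $r=2$, $U\in T$ branch. The paper takes the direct route: by hypothesis there is some binary Boolean $\Gamma\notin T$, so it puts \emph{that} $\Gamma$ on every edge; residuals after covering are unary Boolean, hence in $T$ in this branch. You instead manufacture a universal gadget $\Gamma_0=\{\mathrm{OR},\mathrm{NAND}\}$ and must argue $\Gamma_0\notin T$. Your claim is true --- every binary Boolean relation is pp-definable from $\{\mathrm{OR},\mathrm{NAND}\}$ (one gets $\neq$ as their conjunction, then implication, then all $2$-clauses), so $\mathrm{Pol}(\Gamma_0)\subseteq\mathrm{Pol}(\Gamma')$ for every binary Boolean $\Gamma'$, and hence $\Gamma_0\in T$ would force every binary Boolean language into $T$ --- but this is a Post-lattice/Galois-connection fact, not a consequence of Schaefer's dichotomy, and you do not actually supply the argument. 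The paper's route sidesteps this detour entirely.
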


\begin{proof}
The proof is essentially identical to that of Theorem \ref{thm:binhard}, only with a larger number of cases to examine. First, suppose that every binary Boolean language is in $T$. Then, if $r = 2$, \stbackdoor\ is trivial. If $r \geq 3$, we reduce from $3$-\hset. Let $I=(U,S,k)$ be an instance of $3$-\hset. We create a \csp\ instance with one variable per element in $U$, and for each $(u_i,u_j,u_l) \in S$, if $R_1 = [(1,0),(0,1)] \in T$ we add the constraint $R_2 = [(1,0,0),(0,1,0),(0,0,1)]$ on the corresponding variables and $R_3 = [(1,0,0),(0,1,1)]$ otherwise. Observe that, since CSP($\{R_2\}$) is known to be NP-hard (by a reduction from (1 in 3)-SAT) and $R_3$ is an extension of $R_1$, in either case the added constraint does not belong to $T$. Thus, if a backdoor of size at most $k$ exists, then the corresponding subset of $U$ must be a hitting set. Conversely, let $B$ denote the \csp\ variables associated with a hitting set of size at most $k$. If we used the constraint $R_2$ in the reduction, then after a complete assignment of $B$ the remaining constraints are a subset of $\{R_1, [(0)], [(1)],[(0,0)]\}$ which is in $T$ since $T$ is idempotent and $R_1 \in T$. If the reduction was done using $R_2$, after any assignment we are left with a \csp\ instance whose language is a subset of $\{[(0)],[(1)]\}$ which is in $T$ by idempotency. Finally, in both cases, $B$ is a backdoor of size $k$, which completes this part of the reduction. Now, suppose that there exists a binary Boolean language $\Gamma$ that is not in $T$. Once more, we reduce from \vcover. Let $(G,k)$ be an instance of \vcover\ and $\Gamma_1 = \{[(0,1)]\}$. We create a \csp\ instance with one variable per vertex in $G$. If two vertices are adjacent, we add between them the constraint of $\Gamma_1$ if $\Gamma_1 \notin T$ and all constraints in $\Gamma$ otherwise. By construction any T-backdoor of size at most $k$ must be a vertex cover, and the same line of reasoning as in the proof of Theorem \ref{thm:binhard} gives us that the variables corresponding to any hitting set of size at most $k$ is a T-backdoor, which concludes the proof.\qed
\end{proof}

\subsection{Hardness when the parameter is the size of the backdoor}
\label{sct:size}

In general, a large strong backdoor is not of great computational interest as the associated decomposition of the original instance is very impractical. Thus, it makes sense to design algorithms that are FPT when the parameter is the size of the backdoor. In this section we show that in the case of idempotent composite classes such algorithms cannot exist unless FPT = W[2]. Furthermore, we establish this result under the very restrictive condition that the input \csp\ has a single constraint, which highlights the fact that \stbackdoor\ is more than a simple pseudo-\hset\ on the constraints outside $T$.\

For any natural numbers $m,e$ ($m \geq 3$), we denote by $R_3^m(e)$ the relation obtained by duplicating the last column of $[(e+1,e,e),(e,e+1,e),(e,e,e+1)]$ until the total arity becomes $m$. It is straightforward to see that CSP($\{R_3^m(e)\}$) is NP-hard for every $m,e$ by a reduction from 1-in-3-SAT. In a similar fashion, we define $R_2^m(e)$ as an extension of $[(e+1,e),(e,e+1)]$ of arity $m$.\

\begin{theorem}
\label{thm:w21cons}
\stbackdoor\ is W[2]-hard for every idempotent composite tractable class $T$ when the parameter is the size of the backdoor, even if the \csp\ has a single constraint.
\end{theorem}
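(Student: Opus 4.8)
The plan is to give an FPT\nobreakdash-reduction from $p$-\hset\ with parameter $k$, which is W[2]\nobreakdash-complete once $p$ is part of the input, and to do so with an output \csp\ having a single constraint. Given $(U,\mathcal S,k)$ with $U=\{u_1,\dots,u_n\}$ and $\mathcal S=\{S_1,\dots,S_m\}$, I would introduce one variable $x_i$ per element $u_i$, together with a constant number of auxiliary ``control'' variables, and a single constraint whose scope is all of these variables and whose relation $R$ is a disjoint union of $m$ blocks $R=\bigcup_{j} B_j$, one per set. Each block $B_j$ is an extension of a copy of the hard relation $R_3^{m_j}(e_j)$ placed on the variables of $S_j$ (and the control variables), of appropriate arity: the tuples of $B_j$ carry the private values $e_j,e_j+1$ on the coordinates of $S_j$ and a global default value $0$, distinct from every $e_j$, on all remaining coordinates, and the $e_j$ are spaced out so that the ranges $\{e_j,e_j+1\}$ are pairwise disjoint. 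Since $\csp(\{R_3^{m_j}(e_j)\})$ is NP\nobreakdash-hard and an extension of an NP\nobreakdash-hard relation is again NP\nobreakdash-hard, $\{B_j\}$ lies in no tractable semantic class, in particular in no idempotent composite tractable class; and the pairwise disjointness of the value ranges guarantees that any complete assignment to a subset of variables can be consistent with the tuples of at most one block.

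For the backward direction, if a candidate backdoor $S$ misses some $S_j$ entirely, I would assign every variable of $S$ to the default value $0$. Then every tuple of $B_j$ is consistent with this assignment (the block equals $0$ outside $S_j$ and $S\cap S_j=\emptyset$), so the residual relation contains, as a coordinate\nobreakdash-restriction, an extension of $R_3^{m_j}(e_j)$; hence the residual language is not in $T$ and $S$ is not a backdoor. Consequently the elements underlying any backdoor must hit every $S_j$. For the forward direction, given a hitting set $H$, padded with the control variables to obtain $S$, I would show that for every complete assignment $\alpha$ of $S$ the residual relation has at most one tuple: by the disjoint\nobreakdash-ranges property at most one block can contribute any tuple, and the role of the control variables, together with the particular shape of $R_3^{m_j}(e_j)$ and the fact that $S$ meets $S_j$, is precisely to force that surviving block down to a single tuple (or to nothing). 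A one\nobreakdash-tuple relation is an extension of a single\nobreakdash-tuple relation and hence lies in every idempotent composite tractable class, so $S$ is a backdoor. Tracking sizes, $I$ has a hitting set of size $k$ iff the constructed instance has a backdoor of size $O(k)$; the construction is polynomial, so this proves W[2]\nobreakdash-hardness for the parameter $k$. Choosing the blocks to have bounded arity (controlling how many columns of $R_3^m$ are duplicated) gives the statement with $r$ fixed as well.

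The delicate point, and the step I expect to be the real work, is the forward direction: making ``$S$ meets $S_j$'' already suffice to collapse $B_j$ to at most one tuple under \emph{every} assignment of $S$. Any relation whose \csp\ is NP\nobreakdash-hard has no majority polymorphism, hence has two tuples agreeing on some coordinate, so naively assigning a single variable of $S_j$ can leave two tuples of $B_j$ alive, i.e.\ (after removing constant columns) an extension of $R_2^{m_j}(e_j)$ --- and a disequality need not belong to $T$ (it is not max\nobreakdash-closed) while it does belong to others (it is majority\nobreakdash-closed), so such a residual is not safely in $T$ and would break the reduction. This is exactly what the control variables and the asymmetric ``duplicate the last column'' shape of $R_3^m(e)$ are designed to handle: with a control variable in the backdoor, the value it receives already determines which block can survive and, through the duplicated tail, forces that block down to a single surviving tuple regardless of how the adversary assigns the remaining backdoor variables. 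Getting this interface exactly right --- a hard block that is robust under the extension operation yet collapses to a single tuple as soon as the backdoor touches its set --- together with checking that distinct blocks never interfere, is the crux of the argument.
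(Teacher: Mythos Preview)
Your reduction is on the right track and you have correctly located the only real difficulty, but the fix you sketch does not work, and the paper resolves the issue by a different and much simpler device.

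\medskip
\textbf{Where the gap is.} In the forward direction you need that for \emph{every} assignment of the candidate backdoor $S$, the surviving block $B_j$ (if any) yields a relation in $T$. With $B_j$ built from $R_3^{p}(e_j)$, assigning the single $S$-variable in $S_j$ to the ``majority'' value $e_j$ leaves two tuples whose nonconstant columns form an extension of $R_2^{2}(e_j)=[(e_j{+}1,e_j),(e_j,e_j{+}1)]$. You observe that this residual need not be in $T$ (e.g.\ max-closed), and you propose a constant number of control variables to force collapse to one tuple. This cannot work: if a control variable takes pairwise distinct values in the three tuples of $B_j$, then assigning that control variable alone already selects a unique tuple in \emph{every} block, so the single control variable is a backdoor of size~$1$ and the reduction is trivialised; if instead it repeats a value in $B_j$, the adversary assigns it to that repeated value and two tuples still survive. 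More generally, a constant number of control variables cannot simultaneously distinguish the tuples of all $m$ blocks without themselves forming a tiny backdoor.

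\medskip
\textbf{How the paper fixes it.} The paper does not try to collapse to one tuple. Instead it performs, \emph{per block}, a case distinction on $T$: if $\{R_2^{2}(2i)\}\in T$ then block $i$ is built from $R_3^{p}(2i)$ (three tuples), and the residual with two tuples is an extension of $R_2^{2}(2i)$ and hence in $T$ by idempotency; if $\{R_2^{2}(2i)\}\notin T$ then block $i$ is built from $R_2^{p}(2i)$ (only two tuples), whose two tuples differ on \emph{every} $S_i$-column, so hitting $S_i$ always collapses it to at most one tuple. This per-block choice is exactly the missing idea. Note also that the paper uses $2i$ (not a global default $0$) as the constant value outside $S_i$; this is what makes the ``assign the backdoor to $2i$'' step in the backward direction isolate block $i$ cleanly.

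\medskip
Finally, your closing remark about getting the statement ``with $r$ fixed as well'' is incorrect: with a single constraint, the arity is necessarily $|X|$, hence unbounded. Bounded-arity W[2]-hardness in $k$ is a separate theorem (Theorem~\ref{thm:nothelly}) and requires additional hypotheses on $T$.
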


\begin{proof}
The proof is an FPT-reduction from $p$-\hset\ parameterized with solution size $k$. Let $(p,U,S)$ be an instance of $p$-\hset, where $U$ is the universe ($|U| = n$) and $S = \{S_i \mid i=1..s\}$ is the collection of $p$-sets. We assume without loss of generality that $p \geq 3$ (if this is not the case we pad each set with unique elements). We build an $n$-ary relation $R$, where each column is associated with a value from $U$, as follows. For every $S_i \in S$, we consider two cases. If $\{R_2^2(2i)\} \notin T$, we add two tuples $t_1,t_2$ to $R$ such that the restriction of $[t_1,t_2]$ to the columns corresponding to the values appearing in $S_i$ form the relation $R_2^p(2i)$, and the other columns are constant with value $2i$. If $\{R_2^2(2i)\} \in T$, we add $3$ tuples $t_1,t_2,t_3$ such that the restriction of $t_1,t_2,t_3$ to the columns corresponding to $S_i$ form $R_3^p(2i)$, and the remaining columns are constant with value $2i$. Once the relation is complete, we apply it to $n$ variables to obtain an instance of our backdoor problem. See Figure \ref{fig:kparam} for an example of the construction.

%\removing{
%\begin{figure}
%\centering
%\begin{tabular}{|ccccccc|}
%\multicolumn{1}{c}{$u_1$} & \multicolumn{1}{c}{$u_2$} & \multicolumn{1}{c}{$u_3$} & \multicolumn{1}{c}{$u_4$} & \multicolumn{1}{c}{$u_5$} & \multicolumn{1}{c}{$u_6$}  & \multicolumn{1}{c}{$u_7$}\\
%\hline
%$2$ & $2$ & $3$ & $2$ & $2$ & $2$ & $2$\\
%\hline
%$2$ & $2$ & $2$ & $3$ & $2$ & $2$ & $2$\\
%\hline
%$2$ & $2$ & $2$ & $2$ & $3$ & $2$ & $2$\\
%\hline
%$4$ & $5$ & $4$ & $4$ & $4$ & $4$ & $4$\\
%\hline
%$4$ & $4$ & $4$ & $4$ & $5$ & $5$ & $4$\\
%\hline
%$7$ & $6$ & $6$ & $6$ & $6$ & $6$ & $6$\\
%\hline
%$6$ & $6$ & $7$ & $6$ & $6$ & $6$ & $7$\\
%\hline
%\end{tabular}
%\caption{Constraint built from $U=(u_1,...,u_7)$, with $S_1=(u_3,u_4,u_5)$, $S_2=(u_2,u_5,u_6)$ and $S_3=(u_1,u_3,u_7)$. The target class $T$ contains $R^2_2(2)$, but neither $R^2_2(4)$ nor $R^2_2(6)$.}
%\label{fig:kparamold}
%\end{figure}
%}

\begin{figure}[t]
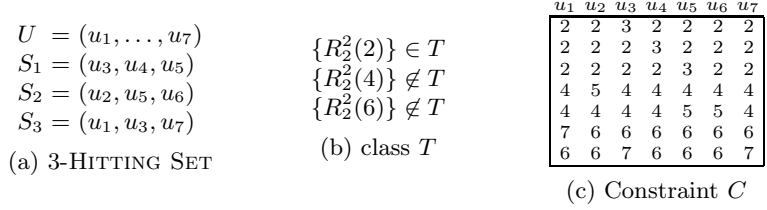

\centering
\subfloat[$3$-\hset]{
  \begin{tabular}{lll}
    %a $3$-\hset\ and a class $T$ \\
    $U $ & $ = $ & $ (u_1,\ldots,u_7)$ \\
    $S_1 $ & $ = $ & $ (u_3,u_4,u_5)$ \\
    $S_2 $ & $ = $ & $ (u_2,u_5,u_6)$ \\
    $S_3 $ & $ = $ & $ (u_1,u_3,u_7)$ \\
  \end{tabular}
}
\hspace{1cm}
\subfloat[class $T$]{
  \begin{tabular}{l}
    $\{R^2_2(2)\} \in T$ \\
    $\{R^2_2(4)\} \not\in T$ \\
    $\{R^2_2(6)\} \not\in T$
  \end{tabular}
}
\hspace{1cm}
\subfloat[Constraint $C$]{
  \scriptsize
  \begin{tabular}{|ccccccc|}
    \multicolumn{1}{c}{$u_1$} & \multicolumn{1}{c}{$u_2$} & \multicolumn{1}{c}{$u_3$} & \multicolumn{1}{c}{$u_4$} & \multicolumn{1}{c}{$u_5$} & \multicolumn{1}{c}{$u_6$}  & \multicolumn{1}{c}{$u_7$}\\
    \hline
    $2$ & $2$ & $3$ & $2$ & $2$ & $2$ & $2$\\
    %\hline
    $2$ & $2$ & $2$ & $3$ & $2$ & $2$ & $2$\\
    %\hline
    $2$ & $2$ & $2$ & $2$ & $3$ & $2$ & $2$\\
    %\hline
    $4$ & $5$ & $4$ & $4$ & $4$ & $4$ & $4$\\
    %\hline
    $4$ & $4$ & $4$ & $4$ & $5$ & $5$ & $4$\\
    %\hline
    $7$ & $6$ & $6$ & $6$ & $6$ & $6$ & $6$\\
    %\hline
    $6$ & $6$ & $7$ & $6$ & $6$ & $6$ & $7$\\
    \hline
  \end{tabular}
}
\caption{
Example of reduction from a $3$-\hset\ instance to the problem of finding a backdoor to the class $T$. 
The reduction produces a single constraint $C$.
% of a single constraint $C$ .
%Constraint built from $U=(u_1,...,u_7)$, with $S_1=(u_3,u_4,u_5)$, $S_2=(u_2,u_5,u_6)$ and $S_3=(u_1,u_3,u_7)$. The target class $T$ contains $R^2_2(2)$, but neither $R^2_2(4)$ nor $R^2_2(6)$.
}
\label{fig:kparam}
\end{figure}

Suppose we have a backdoor of size at most $k$, and suppose there exists a set $S_i$ such that none of the corresponding variables belong to the backdoor. Then, if we assign every variable in the backdoor to $2i$, the reduced constraint must belong to $T$. By idempotency, we can further assign every remaining variable outside of $S_i$ to the value $2i$ and the resulting constraint must still be in $T$. The reduced constraint becomes 
either $R_2^p(2i)$ if $\{R_2^2(2i)\} \notin T$, or $R_3^p(2i)$, which is not in $T$ since $T$ is tractable and we assume P $\neq$ NP. 
%\changing{$R_2^p(2i)$ if $\{R_2^2(2i)\} \notin T$, and $R_3^p(2i)$ otherwise.} %, since $T$ is tractable and we assume P $\neq$ NP}.
In both cases, this constraint does not belong to $T$, and we have a contradiction. Therefore, if there is a backdoor of size at most $k$, we also have a hitting set of size at most $k$.\

Conversely, suppose we have a hitting set of size at most $k$. We prove that the associated set of variables form a backdoor. Observe that two blocks (i.e. pairs/triples) of tuples of the constraint $C$ associated with different sets do not share any common value; hence, after assigning the variables corresponding to the hitting set to any values, the resulting constraint is either empty or a subrelation of a single block associated with the set $S_i$. The latter case yields two possibilities. If $T$ does not contain $\{R_2^2(2i)\}$, then the block $i$ must have been reduced to a single tuple, since the two initial tuples $t_1,t_2$ satisfy $t_1[x_j] \neq t_2[x_j]$ for all $x_j$ associated with a value in $S_i$. Thus, by idempotency, the resulting constraint is in $T$. Now, if $T$ contains $\{R_2^2(2i)\}$, the resulting constraint has at most two tuples (same argument as above), which can only happen if all the variables are assigned to the value $2i$. If we are in this situation, the new constraint must be an extension of $R_2^{2}(2i)$ and hence is in $T$. Therefore, our hitting set provides a strong backdoor in our \csp\ instance, which concludes the reduction.\qed
\end{proof}

This theorem still holds on Boolean \csp s if we allow multiple constraints in the target instance, even if these constraints are all the same relation. % (but we can force them to use the same relation).

\begin{proposition}
\label{prp:w2bool}
On Boolean \csp s, \stbackdoor\ is W[2]-hard for every idempotent composite tractable class $T$ when the parameter is the size of the backdoor, even if the \csp\ has a single type of constraint.
\end{proposition}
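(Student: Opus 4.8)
The plan is to adapt the FPT-reduction from $p$-\hset\ used in Theorem \ref{thm:w21cons} to the Boolean setting. The obstruction in the Boolean case is that we can no longer separate the blocks of tuples associated with different sets $S_i$ by giving them pairwise-disjoint value ranges $\{2i, 2i+1\}$: over a two-element domain all blocks must reuse the same two values $\{0,1\}$. Consequently, after assigning the backdoor variables, the residual constraint may combine tuples coming from \emph{different} blocks, and we can no longer argue that it is a subrelation of a single block. The fix is to use several constraints (all instances of the same relation $R$) rather than one, one constraint per set $S_i$, built on the variable set corresponding to $U$ — this is exactly the ``single type of constraint'' in the statement. For the set $S_i$, the relation $R$ plays the role of $R_2^n$ (or $R_3^n$): its columns indexed by the elements of $S_i$ carry the pattern $[(1,0),(0,1)]$ (resp.\ the 1-in-3 pattern $[(1,0,0),(0,1,0),(0,0,1)]$), and every other column is constant, say with value $0$. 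The choice between the two patterns is again governed by whether the corresponding binary (resp.\ ternary) Boolean ``hard'' relation lies in $T$: as in the proof of the earlier NP-hardness proposition, since CSP of the 1-in-3 relation is NP-hard and CSP of a two-element-domain ``$x \neq y$'' relation may be tractable, at least one of the two candidate relations is guaranteed not to be in $T$, and both are extensions of low-arity relations so their $T$-membership is controlled by idempotency.

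First I would fix, for each $i$, the relation $R^{(i)}$ described above and add the constraint $(S(U), R^{(i)})$ to the instance, where $S(U)$ is the full tuple of $|U|$ variables; to make this literally ``a single type of constraint'' one takes a single relation $R$ of arity $|U|+O(\log s)$ with a few extra columns encoding $i$ in binary, so that all $s$ constraints are applications of the same $R$ to different (overlapping) scopes — or, more simply, one observes that the desired wording only requires that all constraints use the same relation up to the extension operation, which idempotency renders transparent. Next I would prove the forward direction: if there is a backdoor $B$ of size $\le k$ and some $S_i$ has no variable in $B$, then assigning every variable of $B$ to $0$ and (by idempotency) every remaining variable outside $S_i$ to $0$ collapses the $i$-th constraint to $R_2^{|S_i|}(\cdot)$ or $R_3^{|S_i|}(\cdot)$ restricted appropriately — a relation not in $T$ — contradicting that $B$ is a backdoor; hence $B \cap S_i \neq \emptyset$ for all $i$, i.e.\ $B$ is a hitting set. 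Then I would prove the converse: if $H$ is a hitting set of size $\le k$, then for every complete assignment of the variables in $H$ and every $i$, at least one $H$-variable lies in $S_i$, so in the $i$-th constraint the two (or three) tuples of its block disagree on that coordinate and at most one survives; after the assignment the residual $i$-th constraint therefore has at most one tuple (in the $R_2$ case) or, in the $R_3$ case where a two-tuple survival is possible only when everything is pinned to the same value, is an extension of the corresponding in-$T$ relation. In all cases each residual constraint is in $T$, and since composite classes are hereditary under unions of sublanguages, the whole residual language is in $T$, so $H$ induces a backdoor.

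The main obstacle I anticipate is the bookkeeping needed to guarantee ``a single type of constraint'' while keeping the blocks effectively non-interfering across a two-element domain: one must check that adding the $O(\log s)$ binary index columns (constant within each constraint) does not accidentally make some residual constraint fall outside $T$, and that the extension relation defined earlier still lets us reduce $T$-membership of these padded relations to that of the small base relations $R_2^2$ and $R_3^3$. Once this is verified, the correctness proof is a routine translation of the argument of Theorem \ref{thm:w21cons}, and the parameter is preserved since $k' = k$, so the reduction is an FPT-reduction and W[2]-hardness follows from the W[2]-completeness of $p$-\hset\ parameterized by solution size.
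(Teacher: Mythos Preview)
Your overall strategy matches the paper's: reduce from $p$-\hset, place one constraint per set $S_i$, and choose between the two-tuple pattern $R_2^p(0)$ and the three-tuple pattern $R_3^p(0)$ according to whether $\{R_2^p(0)\}\in T$. Where you diverge is in the arity of the constraints, and this is where your difficulties originate.

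You insist on constraints of arity $|U|$ (padding the columns outside $S_i$ with zeros) because you are following the single-constraint construction of Theorem~\ref{thm:w21cons} too literally. This gives you $s$ genuinely different relations $R^{(1)},\ldots,R^{(s)}$, and your fix of appending $O(\log s)$ index columns does not restore ``single type of constraint'': if you pack all blocks into one big relation $R$ and apply it on scopes that differ only in fresh auxiliary variables, then every constraint carries \emph{all} blocks simultaneously, the residual relation after any assignment of $B$ is the same for all constraints, and the intended correspondence with the hitting-set instance collapses; if instead each constraint still singles out its own block via the index columns, you are back to $s$ distinct relations. Your fallback (``same relation up to extension, which idempotency renders transparent'') is not what the statement asserts, and in any case the $R^{(i)}$ are not extensions of one another in the paper's sense.

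The paper sidesteps all of this with a much simpler construction: since multiple constraints are now allowed, give each constraint arity exactly $p$, with scope the $p$ variables corresponding to the elements of $S_i$. The choice between $R_2^p(0)$ and $R_3^p(0)$ depends only on $T$ (not on $i$), so every constraint is literally the same relation and ``single type'' is automatic. The forward direction then becomes simpler than the one you sketch: the unique relation in the language is not in $T$, so any constraint whose scope avoids $B$ survives intact under every assignment of $B$, and hence $B$ must hit every scope --- no idempotency step is needed here. Idempotency is used only in the converse direction, exactly along the lines you describe.
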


\begin{proof}
The proof is similar to the non-Boolean case, but more straightforward since we are allowed multiple constraints. We FPT-reduce from $p$-\hset ~\newline parameterized by solution size. Let $(p,U,S)$ be an instance of $p$-\hset, where $U$ is the universe ($|U| = n$) and $S = \{S_i \mid i=1..s\}$ is the collection of $p$-sets. We assume $p \geq 3$, as we did for Theorem \ref{thm:w21cons}. We create a CSP instance with one variable per element in $U$. For each set $S_i \in S$, if $\{R_2^p(0)\} \in T$ we add the constraint $R_3^p(0)$ on the variables corresponding to the values in $S_i$, and $R_2^p(0)$ otherwise. By construction, the language of the instance only contains constraints outside $T$, so a strong backdoor of size $k$ must intersect every constraint and hence corresponds to a hitting set of $(p,U,S)$. Conversely, the set of variables $B$ corresponding to a hitting set of size at most $k$ form a backdoor: After every assignment of $B$, at least one variable in each constraint is assigned, so the language is either formed of relations with a most one tuple (if $\{R_2^p(0)\} \notin T$) or a collection of extensions of $R_2^p(0)$ ($\{R_2^p(0)\} \notin T$) plus relations with at most one tuple. In either case, the resulting language is in $T$ by idempotency, which concludes the proof.\qed
\end{proof}

%The proof uses the same ideas as Theorem \ref{thm:w21cons} in a straightforward way, and will be omitted.\

%The proof\footnote{Available in an extended version: \url{http://homepages.laas.fr/ccarbonnel/?}} uses the same ideas as Theorem \ref{thm:w21cons} in a straightforward way. 

\begin{remark}
Partition Backdoors is an alternative form of backdoors recently introduced by Bessiere et al~\cite{partition}. Such backdoors are especially interesting in the case of conservative classes (conservativity is more restrictive than idempotency, since each polymorphism is required to satisfy $f(x_1,\ldots,x_a) \in \{x_1,\ldots,x_a\}$). The authors argue that, given a partition of the constraints $C = \{C_1,C_2\}$ such that the language of $C_1$ is in a conservative class $T$, the vertex cover of the primal graph of $C_2$ is a \removing{valid }strong backdoor to $T$. The minimum-size \emph{partition backdoor} is then the best such backdoor over every possible partition of the constraints. Computing the minimum-size partition backdoor is FPT in the parameter $k+l$, where $l$ is the size of the constraint language; our results show that computing the actual minimum strong backdoor is a much harder problem as it is still W[2]-hard for the larger parameters $k+m$ (Theorem \ref{thm:w21cons}) and $d+k+l$ (Proposition \ref{prp:w2bool}), where $m$ is the number of constraints and $d$ the size of the domain.
\end{remark}

\section{Combined parameters: Helly classes and limits}
\label{sec:helly}

We have shown in sections \ref{sct:arity} and \ref{sct:size} that considering independently the maximum constraint arity $r$ and the size of the backdoor $k$ as parameters is unlikely to yield FPT tractability. We now consider the combined parameter $k+r$ and show that FPT tractability ensues for numerous tractable composite classes.\

In order to design an algorithm for \stbackdoor\ that is FPT for $k+r$, it is important to have a procedure to \textit{check} whether a subset of variables of size at most $k$ is a strong backdoor to $T$. The natural algorithm for this task runs in time $O(mrtd^kP(\Gamma))$ (where $m$ is the number of constraints, $t$ the maximum number of tuples and $P(\Gamma)$ the complexity of the membership problem of a language $\Gamma$ in $T$) by checking independently each of the $d^k$ possible assignments of $B$. In our case this approach is not satisfactory: since $d$ is not a parameter, the term $d^k$ is problematic for the prospect of an algorithm FPT in $k+r$. The next lemma presents an alternative algorithm for the ``backdoor check" problem that is only exponential in the number of constraints $m$. Although it may seem impractical  at first sight (as $m$ is typically much larger than $k$), we will show that it can be exploited for many tractable classes.

\begin{lemma}
\label{lem:checkback}
Let $T$ be a composite class recognizable in time $P(\Gamma)$. Let $I=(X,D,C)$ be a \csp\ instance with $m$ constraints of arity at most $r$ and containing at most $t$ tuples, and $B \subseteq X$. It is possible to decide whether $B$ is a strong backdoor to $T$ in time $O(mrt^2 + m^2r(2t)^mP(\Gamma))$.
\end{lemma}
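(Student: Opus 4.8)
The key idea is that although there are $d^k$ possible complete assignments of $B$, the number of \emph{distinct languages} that can arise as residual instances is much smaller, because each constraint $c \in C$ has at most $t$ tuples, so when we project $c$ onto the variables of $B$ and restrict, only a bounded number of distinct residual relations can be obtained from $c$. More precisely, the plan is: for each constraint $c = (S, R)$, an assignment of $B$ that keeps at least one tuple of $c$ alive is determined (as far as its effect on $c$ goes) by a subset of the surviving tuples of $c$ together with the common projection of those tuples onto $S \cap B$ --- so fewer than $2^t$ distinct ``outcomes'' per constraint, and the outcome must be a subrelation of $R$ restricted to $X \setminus B$ obtained as $\{t|_{S\setminus B} : t \in R,\ t|_{S\cap B} = \alpha\}$ for some partial assignment $\alpha$ on $S\cap B$ realized by a tuple of $R$. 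So first I would enumerate, for each constraint, the set of residual relations it can produce; there are at most $t$ of these per constraint (one for each ``profile'' $t|_{S \cap B}$ actually occurring), and I also need to handle the empty-residual case (which contributes the empty language and is harmless). This enumeration costs $O(mrt^2)$ after sorting tuples, matching the first term in the bound --- actually the statement writes $O(mrt^2)$, and the $2t$ base in the second term suggests that for each constraint we also track whether it vanishes, giving at most $t+1 \le 2t$ effective options.

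Second, a \emph{globally} consistent choice of a residual relation for every constraint --- i.e. one actually induced by a single complete assignment of $B$ --- corresponds to choosing, for each $c$, a profile $\alpha_c$ on $S \cap B$, such that all the $\alpha_c$ agree on overlaps and extend to a total assignment of $B$; equivalently, for each variable $x \in B$, all constraints whose scope contains $x$ agree on the value they assign to $x$ (or that constraint has vanished, in which case its tuples no longer constrain $x$). The plan is to iterate over all $(2t)^m$ tuples of per-constraint options (a residual relation or ``vanished'' for each of the $m$ constraints), check in time $O(m^2 r)$ that the chosen profiles are mutually compatible on $B$ (every pair of constraints sharing a $B$-variable agrees, and a vanished constraint imposes no agreement requirement on its variables), and if so, check whether the resulting language $\Gamma' = \{R'_c : c \text{ not vanished}\}$ lies in $T$ using the recognition oracle in time $P(\Gamma)$. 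Then $B$ is a strong backdoor iff \emph{every} globally consistent choice yields a language in $T$. This gives the $O(m^2 r (2t)^m P(\Gamma))$ term.

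The point that needs care is the correspondence between ``globally consistent tuple of options'' and ``complete assignment of $B$'': a single tuple of options may be realizable by many assignments of $B$, but (crucially) \emph{every} assignment of $B$ realizes exactly one such tuple, and (conversely) every compatible tuple of options \emph{is} realized by at least one assignment, since the surviving profiles pin down a partial assignment of $B$ that we can extend arbitrarily on the remaining (unconstrained, or only-in-vanished-constraints) variables of $B$ --- and extending it cannot change any non-vanished residual relation, because those are already determined by the fixed profiles, though it \emph{could} make a vanished constraint non-vanished; so I must be slightly careful and instead say: a tuple of options is ``feasible'' if the forced partial assignment is consistent \emph{and} there is an extension under which precisely the constraints marked vanished are empty. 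This is again checkable in polynomial time per tuple. The main obstacle, then, is not any single deep step but getting this bookkeeping exactly right --- ensuring the enumeration is sound and complete with respect to actual assignments of $B$, and that the empty/vanished cases are treated so that feasibility is poly-time decidable --- while keeping every cost term within the claimed $O(mrt^2 + m^2 r (2t)^m P(\Gamma))$ budget. Finally, I would note that by heredity of composite classes, if the full residual language is in $T$ then so is any sublanguage, so dropping vanished constraints is legitimate and no information is lost.
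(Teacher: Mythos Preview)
Your plan is essentially the paper's: for each constraint compute the at most $t$ nonempty residual relations (the paper does this by a depth-$\le r$ search tree with $\le t$ leaves, giving the $O(mrt^2)$ term), then range over all choices of a local profile-or-``vanished'' per constraint (the paper separates this into $\phi\in\prod_c\phi_c$ together with a subset $C'\subseteq C$, hence the factor $t^m\cdot 2^m=(2t)^m$), check pairwise consistency of the non-vanished profiles in $O(m^2r)$, and test membership of the resulting language in $T$.

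The one genuine snag is your ``exact feasibility'' step. You require, for each tuple of options, that there exist an extension of the pinned partial assignment under which \emph{precisely} the constraints marked vanished become empty, and you assert this is polynomial-time decidable. It is not: deciding whether the unpinned $B$-variables can be set so that every vanished constraint's projection onto $B\cap S_c$ avoids all of its (at most $t$) profiles is itself a general CSP with forbidden-tuple constraints, and a direct reduction from $3$-SAT shows it can be NP-hard. The paper avoids this entirely, and your own closing remark about heredity is exactly the fix: do \emph{not} insist on exact feasibility, only on compatibility of the non-vanished profiles. If $B$ is a strong backdoor, any compatible choice extends to some full assignment $\psi$ of $B$, and the language you test is then a sublanguage of the residual language under $\psi$, hence in $T$ by heredity of composite classes; if $B$ is not a strong backdoor, the offending assignment itself determines a compatible (indeed exactly feasible) choice whose language is not in $T$. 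So drop the feasibility requirement, keep only the $O(m^2r)$ compatibility check, and the argument and the stated complexity go through.
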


\begin{proof}
We first focus on a single constraint $(S,R)$. Let $B_S = B \cap S$. Observe that at most $t$ different assignments of $B_S$ can leave $R$ nonempty, since the subrelations of $R$ obtained with each assignment are pairwise disjoint and their union is $R$. To compute these assignments in polynomial time, one can explore a search tree. Starting from a node labelled ${R}$, we pick a nonfixed variable $v \in B_S$ and for every $d \in D(v)$ such that the subrelation $R_{v=d}$ is not empty we create a child node labelled with $R_{v=d}$. Applying this rule recursively, we obtain a tree of depth at most $r$ and with no more than $t$ leaves, so it has at most $rt$ nodes. The  time spent at each node is $O(t)$, so computing all leaves can be done in time $O(rt^2)$.\
Now, suppose that for each constraint $c = (R,S) \in C$ we have computed this set $\phi_c$ of all the locally consistent assignments of $B_S$ and stored the resulting subrelation. For every $\phi \in \prod_{c \in C} \phi_c$ and every possible subset $C'$ of the constraints, we check if the restriction $\phi_s$ of $\phi$ to the constraints of $C'$ is a consistent assignment (i.e., no variable is assigned multiple values). If $\phi_s$ is consistent, we temporarily remove from the instance the constraints outside $C'$, we apply the assignment $\phi_s$ and we check whether the language of the resulting instance is in $T$. The algorithm returns that $B$ is a backdoor if and only if each membership test in $T$ is successful.\
 To prove the correctness of the algorithm, suppose that $\psi$ is an assignment of $B$ such that the resulting language is not in $T$. Then, at least one subset of the constraints have degraded into non-empty subrelations. For each of these constraints, the restricted assignment $\psi_R$ is consistent with the others, so the algorithm must have checked membership of the resulting language in $T$ and concluded that $B$ is not a strong backdoor. Conversely, if $B$ is a strong backdoor, every complete assignment of $B$ yields an instance in $T$. In particular, if we consider only a subset of the constraints after each assignment, the language obtained is also in $T$ since $T$ is composite (and hence hereditary). Thus, none of the membership tests performed by the algorithm will fail. \changing{The complexity of the algorithm is $O(mrt^2 + m^2rt^m2^mP(\Gamma))$.}\qed
\end{proof}

We say that a composite class $T$ is $h$-Helly if it holds that for any language $\Gamma$, if every $\Gamma_h \subseteq \Gamma$ of size at most $h$ is in $T$ then $\Gamma$ is in $T$. This property is analogous to the well-studied Helly properties for set systems. We call Helly number of $T$ the minimum positive integer $h$ such that $T$ is $h$-Helly. Being characteristic of a class defined exclusively in terms of polymorphisms over $\mathbb{N}$, the Helly number is independant from parameters like the domain size or the arity of the languages. The next theorem is the motivation for the study of such classes, and is the main result of this section.

\begin{theorem}
\label{thm:fpthelly}
For every fixed composite class $T$ recognizable in polynomial time, if $T$ has a finite Helly number then \stbackdoor\ is FPT when the parameter is $k+r$, where $k$ is the size of the backdoor and $r$ is the maximum constraint arity.
\end{theorem}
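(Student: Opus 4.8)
The plan is to combine a bounded-search-tree argument on the constraints outside $T$ with the backdoor-check procedure of Lemma~\ref{lem:checkback}, using the finite Helly number to keep the number of "relevant" constraints bounded by a function of $k+r$. First I would observe that, since $T$ is hereditary and has Helly number $h$, a language $\Gamma$ fails to be in $T$ if and only if some subfamily of at most $h$ of its relations is already outside $T$; moreover membership of such a small subfamily can be tested in polynomial time. The key structural insight is that in \emph{every} residual instance produced by a complete assignment of a candidate backdoor $B$, the obstruction to membership in $T$ is witnessed by at most $h$ constraints. This suggests the following recursive branching: maintain a partial backdoor $B$ (initially empty); if $|B|>k$ reject this branch; otherwise run a bounded-depth exploration to find, if it exists, some assignment $\psi$ of $B$ together with a set of at most $h$ constraints whose residual language under $\psi$ is outside $T$. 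If no such witness exists, $B$ is a backdoor and we accept. If a witness $(\psi, \{c_1,\dots,c_h\})$ exists, then any backdoor extending $B$ must contain a variable from $\bigcup_{j} S_{c_j}$ that is not yet assigned a fixed value making the obstruction disappear — so we branch on adding one such variable to $B$.

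The second step is to bound the branching factor. The union of the scopes $S_{c_1},\dots,S_{c_h}$ contains at most $hr$ variables, so each recursive call spawns at most $hr$ children, and the recursion depth is at most $k$; hence the search tree has at most $(hr)^k$ leaves, which is a function of $k+r$ only (recall $h$ is a fixed constant depending only on $T$). At each node I must (a) find a witnessing obstruction or certify none exists, and (b) do the bookkeeping. For (a), I would adapt the computation of "locally consistent assignments" from the proof of Lemma~\ref{lem:checkback}: for each constraint $c$, the set $\phi_c$ of assignments of $B\cap S_c$ leaving $R_c$ nonempty has size at most $t$ and is computable in time $O(rt^2)$; then, rather than iterating over all $2^m$ subsets of constraints as in Lemma~\ref{lem:checkback}, I only need to look at subsets of size at most $h$ — there are $O(m^h)$ of these — and for each such $h$-subset and each compatible combination of local assignments (at most $t^h$ of them) test membership of the resulting $\le h$-relation language in $T$ in time $P(\Gamma)$. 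So one node costs $O(mrt^2 + m^h t^h P(\Gamma))$, which is polynomial in the instance size for fixed $h$. Multiplying by the number of nodes gives total time $O\bigl((hr)^k (mrt^2 + m^h t^h P(\Gamma))\bigr)$, which is FPT in $k+r$.

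Correctness splits into the two usual directions. If $B$ is a genuine strong backdoor of size $\le k$, then (by heredity of $T$, exactly as in Lemma~\ref{lem:checkback}) no assignment of $B$ restricted to any subset of constraints can produce a language outside $T$, so the witness search at the node for $B$ returns "none" and we accept — and no ancestor branch could have rejected, since at each ancestor the obstruction found was real and forced the inclusion of one of the $\le hr$ listed variables, one of which lies in $B$ (here is where I use that $B$ must "kill" the witness: since the full assignment extending the ancestor's $\psi$ by the rest of $B$ and then arbitrary values still lies in $T$, $B$ must assign some variable of the witness scope). Conversely, if the algorithm accepts at some node with partial set $B$ of size $\le k$, the witness search certified that for \emph{every} assignment $\psi$ of $B$ and every $\le h$-subset of constraints the residual language is in $T$; by the Helly property this means every assignment of $B$ leaves the \emph{entire} residual language in $T$, i.e. $B$ is a strong backdoor.

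The main obstacle I anticipate is making the branching step sound: I must argue that whenever an obstruction $(\psi,\{c_1,\dots,c_h\})$ is found for the current partial $B$, every strong backdoor $B^\star \supseteq B$ of size $\le k$ necessarily contains an unfixed variable from $\bigcup_j S_{c_j}$, so that branching over these (at most $hr$) variables is exhaustive. The subtlety is that $\psi$ only assigns $B$, whereas $B^\star$ may be larger; one has to check that extending $\psi$ to $B^\star$ (by any values) and looking at the same $h$ constraints still yields a language outside $T$ \emph{unless} $B^\star$ touches one of those scopes — this uses that the obstruction relations, being witnessed already on $B$'s assignment, only shrink further under additional assignments to variables outside their scopes, and relations outside an idempotent/hereditary class stay outside when a constant column is appended or a column is duplicated (the "extension" observation from Section~\ref{sec:idem}). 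Getting this monotonicity argument exactly right, together with the interaction between "a variable of the witness scope being assigned" and "the obstruction disappearing", is the delicate part; everything else is routine bounded-search-tree bookkeeping plus the counting already done in Lemma~\ref{lem:checkback}.
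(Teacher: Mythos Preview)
Your approach is essentially identical to the paper's: a depth-$k$ bounded search tree with branching factor at most $hr$, where at each node one searches over all $O(m^h)$ size-$\le h$ subsets of constraints and uses the locally-consistent-assignment enumeration of Lemma~\ref{lem:checkback} (restricted to $h$ constraints) to find a witnessing obstruction; the Helly property then guarantees that absence of any such witness certifies a genuine backdoor.

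The ``delicate part'' you flag is in fact trivial, and you are overcomplicating it: if $B^\star\supseteq B$ contains no additional variable from $\bigcup_j S_{c_j}$, then extending $\psi$ to $B^\star$ by \emph{any} values leaves the residual relations on $c_1,\dots,c_h$ \emph{unchanged} (not merely ``shrunk''), since assigning a variable outside a constraint's scope has no effect whatsoever on that constraint. No monotonicity, no appeal to idempotency or to the extension observation of Section~\ref{sec:idem} is needed here---and indeed the theorem does not assume $T$ idempotent, as the paper explicitly remarks right after the proof.
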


\begin{proof}
Let $h$ denote the Helly number of $T$. The algorithm is a bounded search tree that proceeds as follows. Each node is labelled by a subset of variables $B$. The root of the tree is labelled with the empty set. At each node, we examine every possible combination of $h$ constraints and check if $B$ is a strong backdoor for the subset in time $O(hrt^2 + h^2r(2t)^hP(\Gamma))$ (where $P(\Gamma)$ is the polynomial complexity of deciding the membership of a language $\Gamma$ in $T$) using Lemma \ref{lem:checkback}. Suppose that $B$ is a strong backdoor for every $h$-subset. Then, for any possible assignment of $B$, each $h$-subset of the constraints of the resulting instance must be in $T$: otherwise, $B$ would not be a strong backdoor for the $h$ original constraints that generated them. Since $T$ is $h$-Helly, we can conclude that $B$ is a valid strong backdoor for the whole instance. Now suppose that we have found a $h$-subset for which $B$ is not a strong backdoor. For every variable $x$ in the union of the scopes of the constraints in this subset that is not already in $B$ (there are at most $rh$ such variables $x$), we create a child node labelled with $B \cup \{ x \}$. At each step we are guaranteed to add at least one variable to $B$, so we stop creating child nodes when we reach depth $k$. The algorithm returns `YES' at the first node visited that corresponds to a strong backdoor, and `NO' if no such node is found.\

If no strong backdoor of size at most $k$ exists, it is clear that the algorithm correctly returns `NO'. Now suppose that a strong backdoor $\mathbf{B}$, $|\mathbf{B}| \leq k$, exists. Observe that if a node is labelled with $B \subset \mathbf{B}$ and $B$ is not a backdoor for some $h$-subset of constraints, then $\mathbf{B}$ contains at least one more variable within this subset. Since the algorithm creates one child per variable that can be added and the root is labelled with a subset of $\mathbf{B}$, by induction there must be a path from the root to a node labelled with $\mathbf{B}$ and the algorithm returns `YES'.\

The complexity of the procedure is $O\left((rh)^km^h(hrt^2 + h^2r(2t)^hP(\Gamma))\right) = O\left(f(k+r)m^h(ht^2 + h^2(2t)^hP(\Gamma))\right)$.\qed
\end{proof}

In contrast to the previous hardness results, the target tractable class is not required to be idempotent. However, the target class must have a finite Helly number, which may seem restrictive. The following series of results aims to identify composite classes with this particular property.

\begin{lemma}
\label{lem:1helly}
A composite class $T$ is simple if and only if it is $1$-Helly. 
\end{lemma}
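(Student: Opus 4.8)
The plan is to prove both directions of the equivalence, exploiting the definitions of ``simple'' (an intersection of atomic classes) and ``$1$-Helly'' (membership is determined by singletons $\{R\}\subseteq\Gamma$). The easy direction is $(\Rightarrow)$: suppose $T=\cap_{T_f\in\mathcal{T}}T_f$ is simple, and let $\Gamma$ be a language such that every singleton sublanguage $\{R\}$ with $R\in\Gamma$ lies in $T$. I would unwind what this means: for each $R\in\Gamma$ and each atomic class $T_f\in\mathcal{T}$, the restriction $f_{|D(\{R\})}$ is a polymorphism of $\{R\}$. Since the polymorphism condition for $f$ on a language is just the conjunction of the polymorphism conditions on each of its relations (reading off the appropriate restriction of $f$ to $D(\{R\})\subseteq D(\Gamma)$), it follows that $f_{|D(\Gamma)}\in\mathrm{Pol}(\Gamma)$ for every $T_f\in\mathcal{T}$, hence $\Gamma\in T_f$ for all such $f$, hence $\Gamma\in T$. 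So $T$ is $1$-Helly. One small point to get right here: the domains vary, so I must be careful that ``$f_{|D(\{R\})}\in\mathrm{Pol}(\{R\})$ for all $R$'' genuinely implies ``$f_{|D(\Gamma)}\in\mathrm{Pol}(\Gamma)$'' — but this is immediate because $D(\Gamma)=\bigcup_{R\in\Gamma}D(\{R\})$ and applying $f$ componentwise to tuples of a single relation $R$ only ever evaluates $f$ on arguments from $D(\{R\})$.

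For the converse $(\Leftarrow)$, suppose $T$ is $1$-Helly; I want to exhibit $T$ as an intersection of atomic classes. The natural candidate is
\[
\mathcal{T}=\{\,T_f \;:\; f:\mathbb{N}^a\to\mathbb{N}\text{ for some }a,\ \text{and every }\Gamma\in T\text{ with }f_{|D(\Gamma)}\in\mathrm{Pol}(\Gamma)\ \}
\]
— more precisely, I would take $\mathcal{T}$ to be the set of all atomic classes $T_f$ such that $T\subseteq T_f$, and claim $T=\cap_{T_f\in\mathcal{T}}T_f$. One inclusion, $T\subseteq\cap_{T_f\in\mathcal{T}}T_f$, is trivial by choice of $\mathcal{T}$. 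For the reverse inclusion I would argue by contraposition: take $\Gamma\notin T$; I must produce an atomic class $T_f$ with $T\subseteq T_f$ but $\Gamma\notin T_f$. By $1$-Helly-ness, since $\Gamma\notin T$ there is a single relation $R\in\Gamma$ with $\{R\}\notin T$. Now I need an operation $f$ that is a polymorphism of every language in $T$ but not of $\{R\}$. The idea is that the ``universal'' operation witnessing membership — i.e., the operation whose graph is forced to contain exactly the tuples dictated by closure under all languages of $T$ simultaneously — should be definable; concretely, since $T$ is a composite class it is hereditary and is built from atomic classes, and I would use that structure (or a compactness/closure argument on $\mathbb{N}$) to produce, for each ``bad'' pair $(R,\{R\}\notin T)$, an arity $a$ and an operation $f:\mathbb{N}^a\to\mathbb{N}$ preserving all of $T$ but not $\{R\}$.

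The main obstacle is exactly this last step: showing that for a relation $R$ with $\{R\}\notin T$ there is a \emph{single finitely-described} operation $f$ over $\mathbb{N}$ that preserves every language in $T$ yet violates the polymorphism condition on $R$. The cleanest route I can see is to note that $T$, being composite, has the form $\cup_{s}T_s$ with each $T_s=\cap_{f\in\mathcal{F}_s}T_f$ simple; then $\{R\}\notin T$ means $\{R\}\notin T_s$ for every $s$, so for each $s$ there is some $f_s\in\mathcal{F}_s$ with $f_s$ not a polymorphism of $\{R\}$. This gives many operations, each failing on $R$, but each only known to preserve the languages in $T_s$, not all of $T$ — so I cannot pick one of them directly. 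To repair this I would instead go back to the $1$-Helly hypothesis and argue more abstractly: define $T_f$-membership via the set $\mathcal{O}$ of \emph{all} operations (of all arities) that preserve every language in $T$, and show $T=\cap_{f\in\mathcal{O}}T_f$ using $1$-Helly-ness plus the Galois-type correspondence between relations and the operations preserving them — i.e., $\{R\}\notin T$ with $\{R\}$ ``locally'' not forced means some operation preserving all of $T$ fails on $R$. I expect the write-up to hinge on making this Galois/closure argument precise over the infinite domain $\mathbb{N}$, likely by restricting attention to the finite subdomain $D(\{R\})$ and the finitely many relations of $T$ over that subdomain, then extending the witnessing operation arbitrarily to all of $\mathbb{N}$. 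Everything else is bookkeeping with the definitions.
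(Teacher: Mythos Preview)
Your forward direction is correct and matches the paper. The backward direction has the right setup---take $\mathcal{F}$ to be all operations preserving every language in $T$ (equivalently, by 1-Helly-ness, every singleton $\{R\}\in T$), and try to show $T=\bigcap_{f\in\mathcal{F}}T_f$---but you get stuck on the inclusion $\bigcap_{f\in\mathcal{F}}T_f\subseteq T$, and your proposed Galois/witness-finding route is left incomplete.

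The missing idea, which the paper uses and which dissolves the obstacle entirely, is to combine two facts already available in the preliminaries. First, form the single (possibly infinite) language $\Gamma_\infty=\bigcup_{\{R\}\in T}\{R\}$; since every singleton sublanguage of $\Gamma_\infty$ is in $T$ by construction, the 1-Helly hypothesis gives $\Gamma_\infty\in T$ outright. Second, recall that composite classes are hereditary in the strong sense that $\Gamma'\in T$ and $\mathrm{Pol}(\Gamma')\subseteq\mathrm{Pol}(\Gamma)$ imply $\Gamma\in T$. Now if $\Gamma\in\bigcap_{f\in\mathcal{F}}T_f$, then by definition $\mathcal{F}\subseteq\mathrm{Pol}(\Gamma)$; but $\mathcal{F}$ is exactly $\mathrm{Pol}(\Gamma_\infty)$, so $\mathrm{Pol}(\Gamma_\infty)\subseteq\mathrm{Pol}(\Gamma)$, and heredity with $\Gamma_\infty\in T$ yields $\Gamma\in T$ immediately. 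No contraposition, no witness operation for each bad $R$, no compactness argument on $\mathbb{N}$ is needed. Your instinct that ``some operation preserving all of $T$ fails on $R$ whenever $\{R\}\notin T$'' is in fact \emph{true}, but it is a consequence of this argument rather than something you need to establish directly.
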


\begin{proof}
Let $T$ be a simple class, i.e. an intersection of atomic classes $T = \cap_{f \in \mathcal{F}}T_f$. Let $\Gamma$ be a constraint language such that each $\{R\} \subseteq \Gamma$ is in $T$. Then, every $f \in \mathcal{F}$ preserves every relation in $\Gamma$ and thus preserves $\Gamma$, so $\Gamma \in T$ and $T$ is $1$-Helly.\
Conversely, let $T$ be a $1$-Helly composite class. Let $\mathcal{F} = \{f \mid f$ preserves every $\{R\} \in T \}$. Every $\Gamma \in T$ admits as polymorphism every $f \in \mathcal{F}$ (as each $\{R\} \subseteq \Gamma$ is in $T$ and thus is preserved by $f$), so $T \subseteq \cap_{f \in \mathcal{F}}T_f$. The other way round, a language $\Gamma$ in $\cap_{f \in \mathcal{F}}T_f$ is preserved by every $f \in \mathcal{F}$ and thus must be a sublanguage of $\Gamma_{\infty} = \cup_{\{R\} \in T}\{R\}$, which is in $T$ since $T$ is $1$-Helly, so $\Gamma \in T$ and $\cap_{f \in \mathcal{F}}T_f \subseteq T$. Finally, $T = \cap_{f \in \mathcal{F}}T_f$ and so $T$ is simple.\qed
\end{proof}

\begin{proposition}
\label{prp:helly}
Let $h$ be a positive integer and $\mathcal{T}$ be a set of simple classes. Then, $T = \{ \Gamma \mid \Gamma$ belongs to every $T_i \in \mathcal{T}$ except at most $h\}$ is a $(h+1)$-Helly composite class.
\end{proposition}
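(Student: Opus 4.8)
The plan is to verify the two assertions separately: first that $T$ is composite, then that it is $(h+1)$-Helly.

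For compositeness, I would give $T$ an explicit representation as a union of simple classes. For a language $\Gamma$, let $J_\Gamma = \{T_i \in \mathcal{T} \mid \Gamma \notin T_i\}$, so that $\Gamma \in T$ precisely when $|J_\Gamma| \le h$. If $|J_\Gamma| \le h$, then $\Gamma$ lies in $\bigcap_{T_i \in \mathcal{T} \setminus J_\Gamma} T_i$; conversely, if $\Gamma \in \bigcap_{T_i \in \mathcal{T} \setminus J} T_i$ for some $J \subseteq \mathcal{T}$ with $|J| \le h$, then $J_\Gamma \subseteq J$ and hence $|J_\Gamma| \le h$. Therefore $T = \bigcup_{J \subseteq \mathcal{T},\, |J| \le h}\ \bigcap_{T_i \in \mathcal{T} \setminus J} T_i$. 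Each inner term is an intersection of simple classes; since every simple class is itself an intersection of atomic classes, the inner term is again an intersection of atomic classes (the index family may be infinite, which the definition permits), hence simple. So $T$ is a union of simple classes and is therefore composite.

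For the Helly property I would argue by contraposition: assuming $\Gamma \notin T$, I produce a subset $\Gamma' \subseteq \Gamma$ with $|\Gamma'| \le h+1$ and $\Gamma' \notin T$. Since $\Gamma \notin T$, the set $J_\Gamma$ has size at least $h+1$, so I may pick $h+1$ pairwise distinct classes $T_{i_0}, \dots, T_{i_h} \in \mathcal{T}$ with $\Gamma \notin T_{i_j}$ for every $j$. Each $T_{i_j}$ is simple, hence $1$-Helly by Lemma~\ref{lem:1helly}; so $\Gamma \notin T_{i_j}$ forces a single relation $R_j \in \Gamma$ with $\{R_j\} \notin T_{i_j}$. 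Put $\Gamma' = \{R_0, \dots, R_h\}$, a sublanguage of $\Gamma$ with at most $h+1$ relations. For each $j$ we have $\{R_j\} \subseteq \Gamma'$ and $\{R_j\} \notin T_{i_j}$, so by heredity of membership (a simple class is composite, and composite classes are hereditary) $\Gamma' \notin T_{i_j}$. Hence $\Gamma'$ fails to belong to the $h+1$ distinct classes $T_{i_0}, \dots, T_{i_h}$, so $\Gamma' \notin T$, which is exactly the contrapositive of the $(h+1)$-Helly condition.

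I do not anticipate a genuine obstacle, but two points merit care. The first, used in the compositeness argument, is that an arbitrary (possibly infinite) intersection of simple classes is again simple; this follows once one writes each simple class as an intersection of atomic classes and merges the index families. The second is the bookkeeping in the Helly argument: the relevant count is of the $h+1$ distinct classes $T_{i_j}$, whereas the witnessing relations $R_j$ need not be distinct — harmless, since only $|\Gamma'| \le h+1$ is required. It would also be worth noting that the bound $h+1$ is tight in general, so one should not expect $T$ to be $h$-Helly.
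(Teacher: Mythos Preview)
Your proof is correct. The compositeness argument is essentially the paper's, stated with a bit more care about why an intersection of simple classes remains simple. The $(h+1)$-Helly argument, however, differs from the paper's. You argue by contraposition: from $\Gamma\notin T$ you extract $h+1$ failing simple classes, use Lemma~\ref{lem:1helly} to find a single witnessing relation for each, and bundle these into a sublanguage $\Gamma'$ of size at most $h+1$ that still fails all $h+1$ classes by heredity. The paper instead works directly: it defines $S(R)=\{T_i\mid\{R\}\notin T_i\}$, observes that $\Gamma\in T$ iff $\lvert\bigcup_{R\in\Gamma}S(R)\rvert\le h$, and then uses a pigeonhole step on the chain $1\le s_1\le\cdots\le s_{h+1}\le h$ (where $s_j$ is the maximum of $\lvert\bigcup_{R\in\Gamma_j}S(R)\rvert$ over size-$j$ sublanguages) to find a $j$ with $s_j=s_{j+1}$, forcing $\bigcup_{R\in\Gamma}S(R)$ to already be covered by some $\Gamma_j$. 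Your route is shorter and more transparent; the paper's $S(R)$ bookkeeping, while heavier here, has the minor advantage of making the equivalence ``$\Gamma\in T\Leftrightarrow\lvert\bigcup_R S(R)\rvert\le h$'' explicit, which can be reused. Your remark that the $R_j$ need not be distinct is exactly the right caveat, and your side comment on tightness is accurate (Corollary~\ref{cor:helly} and the surrounding discussion witness this).
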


\begin{proof}
%T is composite since it is the union of every possible intersection of all but $h$ classes from $\mathcal{T}$ and any class derived from atomic classes through any combination of intersections and unions is composite. We write $\mathcal{T} = \{T_i \mid i \in I\}$. Let $\Gamma$ be a language such that every sublanguage of size $h+1$ is in $T$. For each $R \in \Gamma$ we define $S(R) = \{T_i \mid \{R\} \notin T_i\}$. By Lemma \ref{lem:1helly}, simple classes are $1$-Helly so $\Gamma \notin T_i$ if and only if $T_i \in \cup_{R \in \Gamma}S(R)$ and  then $\Gamma \in T$ if and only if $|\cup_{R \in \Gamma}S(R)| \leq h$. We discard from $\Gamma$ every relation $R$ such that $|S(R)| = 0$ as they have no influence on the membership of $\Gamma$ in $T$. If that process leaves $\Gamma$ empty, then it belongs to $T$. Otherwise, let $s_i$ denote the maximum size of the union of $i$ sets $S(R)$. Since each sublanguage of size $i$ is in $T$, we have just shown that $1 \leq s_1 \leq .. \leq s_{h+1} \leq h$, thus there exists $j<h+1$ such that $s_j = s_{j+1}$. Let $S_j$ denote a set of $j$ relations such that $|\cup_{R \in S_j}S(R)| = s_j$. Suppose there exists $R_i \in \Gamma$ such that $S(R_i) \not\subseteq \cup_{R \in S_j}S(R)$. Then, $|\cup_{R \in S_j \cup S(R_i)} S(R)| > s_j = s_{j+1}$, and we get a contradiction. Finally, $\cup_{R \in \Gamma}S(R) \subseteq \cup_{R \in S_j}S(R)$, thus $|\cup_{R \in \Gamma}S(R)| \leq h$ and $\Gamma$ is in $T$. Therefore, $T$ is $(h+1)$-Helly.\qed
T is composite since it is the union of every possible intersection
of all but $h$ classes from $\mathcal{T}$ and any class derived from
atomic classes through any combination of intersections and unions is
composite. We write $\mathcal{T} = \{T_i \mid i \in I\}$. Let
$\Gamma$ be a language such that every sublanguage of size at most
$h+1$ is in $T$. For each $R \in \Gamma$ we define $S(R) = \{T_i \mid
\{R\} \notin T_i\}$. By Lemma \ref{lem:1helly}, simple classes are
$1$-Helly so $\Gamma \notin T_i$ $\Leftrightarrow$ ($\exists R \in
\Gamma$ such that $\{R\} \notin T_i$)  $\Leftrightarrow$ $T_i \in
\cup_{R \in \Gamma}S(R)$. So $\Gamma \in T$ if and only if $|\cup_{R
\in \Gamma}S(R)| \leq h$. We discard from $\Gamma$ every relation $R$ such that $|S(R)| = 0$ as
they have no influence on the membership of $\Gamma$ in $T$. If that
process leaves $\Gamma$ empty, then it belongs to $T$. Otherwise, let $s_j$ denote the maximum size of $\cup_{R \in \Gamma_j} S(R)$
over all size-$j$ subsets $\Gamma_j$ of $\Gamma$. Since each
sublanguage $\Gamma_j$ of size $j \leq h+1$ is in $T$, from the
argument above we have $1 \leq s_1 \leq \ldots \leq s_{h+1} \leq h$, thus
there exists $j<h+1$ such that $s_j = s_{j+1}$. Let $\Gamma_j
\subseteq \Gamma$ denote a set of $j$ relations such that $|\cup_{R
\in \Gamma_j}S(R)| = s_j$. Suppose there exists $R_0 \in \Gamma$ such
that $S(R_0) \not\subseteq \cup_{R \in \Gamma_j}S(R)$. Then,
$|\cup_{R \in \Gamma_j \cup \{R_0\}} S(R)| > s_j = s_{j+1}$, and we get a contradiction. So
$\cup_{R \in \Gamma}S(R) \subseteq \cup_{R \in \Gamma_j}S(R)$, hence
$|\cup_{R \in \Gamma}S(R)| \leq h$ and $\Gamma$ is in $T$. Therefore,
$T$ is $(h+1)$-Helly.\qed
\end{proof}

In the particular case where $\mathcal{T}$ is finite and $h=|\mathcal{T}|-1$, we get the following nice corollary. Recall that a composite class is any union of simple classes.

\begin{corollary}
\label{cor:helly}
Any union of $h$ simple classes is $h$-Helly.
\end{corollary}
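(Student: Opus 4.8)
The plan is to derive the corollary directly from Proposition~\ref{prp:helly}, which already carries all the combinatorial weight; the corollary is essentially just the finite special case in which we forbid a language from missing all but one of the classes. So the proof will be short, and the only things worth checking are a set-theoretic identity and a trivial boundary case.

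Concretely, suppose $T = T_1 \cup \cdots \cup T_h$ is a union of $h$ simple classes, with $h \geq 1$. First I would set $\mathcal{T} = \{T_1,\ldots,T_h\}$, a finite set of simple classes, and apply Proposition~\ref{prp:helly} with the integer parameter $h-1$ playing the role of the ``$h$'' in its statement. This produces the class $T' = \{\Gamma \mid \Gamma$ belongs to every $T_i \in \mathcal{T}$ except at most $h-1\}$, which the proposition guarantees is a $((h-1)+1) = h$-Helly composite class. The second step is to observe that $T' = T$: since $|\mathcal{T}| = h$, a language that fails to belong to at most $h-1$ of the $T_i$ is precisely a language that belongs to at least one of them, i.e. a member of $T_1 \cup \cdots \cup T_h$. (The empty language is a member of every composite class, hence of $T$, so it causes no trouble.) Combining the two steps gives that $T$ is $h$-Helly, which is the claim.

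The only genuine obstacle is bookkeeping at the boundary: when $h=1$ the proposition would be invoked with parameter $0$, which is not literally a positive integer. In that case, though, $T = T_1$ is simple and hence $1$-Helly by Lemma~\ref{lem:1helly}, so the statement still holds; I would either restrict the corollary to $h \geq 2$ and treat $h=1$ separately via Lemma~\ref{lem:1helly}, or note that Proposition~\ref{prp:helly} and its proof go through unchanged for parameter $0$. More broadly, this also indicates an alternative self-contained route — induction on $h$ with Lemma~\ref{lem:1helly} as the base case and a pigeonhole argument on the sets $S(R)$ at the inductive step — but reusing Proposition~\ref{prp:helly} is cleaner and is the approach I would take.
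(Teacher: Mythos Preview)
Your proposal is correct and matches the paper's own derivation: the corollary is stated immediately after Proposition~\ref{prp:helly} as the special case $\mathcal{T}$ finite with the proposition's parameter set to $|\mathcal{T}|-1$, exactly as you argue. Your observation about the $h=1$ boundary (and its resolution via Lemma~\ref{lem:1helly}) is a nice bit of care that the paper itself leaves implicit.
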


\begin{example}
\label{exm:helly}
Let $T = \{ \Gamma \mid \Gamma$ is either min-closed, max-closed or 0/1/all$\}$. $T$ is the union of $3$ well-known tractable semantic classes. By definition, min-closed and max-closed constraints are respectively the languages that admit $\min(.,.)$ and $\max(.,.)$ as polymorphisms. Likewise, 0/1/all constraints have been shown to be exactly the languages that admit as polymorphism the majority operation~\cite{closure}\
$$
f(x,y,z)=\left\{
\begin{array}{c l}     
    y & \text{if } y = z\\
    x & \text{otherwise}
\end{array}\right.
$$
Thus, $T$ is the union of 3 atomic classes and hence is $3$-Helly by Corollary \ref{cor:helly}.
% and by Corollary \ref{cor:helly} it is $3$-Helly. 
Since $T$ is also recognizable in polynomial time, by Theorem \ref{thm:fpthelly} \stbackdoor\ is FPT 
%with respect to 
when parameterized by
backdoor size and maximum arity.
\end{example}

In the light of these results, it would be very interesting to show a dichotomy. Is \stbackdoor\ with parameter $k+r$ at least W[1]-hard for every tractable composite class $T$ that does not have a finite Helly number? While we leave most of this question unanswered, we have identified generic sufficient conditions for W[2]-hardness when $r$ is fixed and the parameter is $k$.\

Given a bijection $\phi: D_1 \rightarrow D_2$, we denote by $R_{\phi}$ the relation $[(d,\phi(d)), d \in D_1]$. Given a language $\Gamma$, a subdomain $D_1$ of $D(\Gamma)$ is said to be \textit{conservative} if every $f \in$ Pol($\Gamma$) satisfies $f(x_1,\ldots,x_m) \in D_1$ whenever $\{x_1,\ldots,x_m\} \subseteq D_1$. For instance, $D(\Gamma')$ is conservative for every $\Gamma' \subseteq \Gamma$, and for every column of some $R \in \Gamma$ the set of values that appear in that column is conservative. Then, we say that a class $T$ is \textit{value-renamable} if for every $\Gamma \in T$ and $\phi : D_1 \rightarrow D_2$, where $D_1$ is a conservative subdomain of $\Gamma$ and $D_2 \cap D(\Gamma) = \emptyset$, $\Gamma \cup \{R_\phi\}$ is in $T$. For instance, the class of 0/1/all constraints introduced in Example \ref{exm:helly} is value-renamable, but max-closed constraints are not (as they rely on a fixed order on $\mathbb{N}$). We also say that a composite class $T$ is \textit{domain-decomposable} if for each pair of languages $\Gamma_1 \in T$ and $\Gamma_2 \in T$, $D(\Gamma_1) \cap D(\Gamma_2) = \emptyset$ implies $\Gamma_1 \cup \Gamma_2 \in T$. Value-renamability and domain-decomposability are natural properties of any class that is large enough to be invariant under minor (from the algorithmic viewpoint) modifications of the constraint languages.\

Given a language $\Gamma$ and a bijection $\phi: D(\Gamma) \rightarrow D'$, we denote by $\phi(\Gamma)$ the language over $D'$ obtained by replacing every tuple $t = (d_1,\ldots,d_r)$ in every relation in $\Gamma$ by $\phi(t) = (\phi(d_1),\ldots,\phi(d_r))$.

\begin{lemma}
\label{lem:valueren}
Let $\Gamma = \Gamma_1 \cup \Gamma_2 \cup \{R_\phi\}$ where $\phi$ is a bijection from $D(\Gamma_2)$ to some domain $D_1$. Then, Pol($\Gamma$) $\subseteq$ Pol($\Gamma_1 \cup \phi(\Gamma_2)$).
\end{lemma}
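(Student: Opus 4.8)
The plan is to show that every polymorphism of $\Gamma = \Gamma_1 \cup \Gamma_2 \cup \{R_\phi\}$, when restricted appropriately, preserves $\Gamma_1 \cup \phi(\Gamma_2)$. Let $g \in$ Pol($\Gamma$) have arity $m$. The domain of $\Gamma_1 \cup \phi(\Gamma_2)$ is $D(\Gamma_1) \cup D_1$, which is a subset of $D(\Gamma)$ (since $D_1 \subseteq D(\Gamma)$ because $R_\phi$ has its image column in $D_1$, and also $D(\Gamma_2) \subseteq D(\Gamma)$). Following the convention from the preliminaries, I must check that the restriction $g' := g_{|D(\Gamma_1)\cup D_1}$ is a polymorphism of $\Gamma_1 \cup \phi(\Gamma_2)$. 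First I would note that $D_1 = \phi(D(\Gamma_2))$ is the image column of $R_\phi$, hence conservative for $\Gamma$ (the set of values in a column of a relation in the language is always conservative); so $g$ maps tuples over $D_1$ into $D_1$, which makes $g'$ well-defined on $D_1$, and similarly on $D(\Gamma_1)$.

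The core argument splits into the two halves of the union. For $\Gamma_1$: since $\Gamma_1 \subseteq \Gamma$, $g$ already preserves every $R \in \Gamma_1$, and restricting to the (conservative) subdomain $D(\Gamma_1)\cup D_1 \supseteq D(\Gamma_1)$ changes nothing relevant, so $g'$ preserves $\Gamma_1$. For $\phi(\Gamma_2)$: take $R \in \Gamma_2$ of arity $r$ and tuples $s_1,\dots,s_m \in \phi(R)$, so $s_j = \phi(t_j)$ for $t_j \in R$. I want $g'(s_1,\dots,s_m) \in \phi(R)$, i.e. $\phi^{-1}(g(\phi(t_1),\dots,\phi(t_m))) \in R$ coordinatewise. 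The key link is $R_\phi = [(d,\phi(d)) : d \in D(\Gamma_2)]$: applying $g$ coordinatewise to the rows $(d_j, \phi(d_j))$ of $R_\phi$ (where $d_j$ is the relevant coordinate value of $t_j$) and using that $g$ preserves $R_\phi$, we get that $(g(d_1,\dots,d_m), g(\phi(d_1),\dots,\phi(d_m)))$ is a row of $R_\phi$, i.e. $g(\phi(d_1),\dots,\phi(d_m)) = \phi(g(d_1,\dots,d_m))$. Thus $g$ commutes with $\phi$ on $D(\Gamma_2)$. Applying this coordinate by coordinate to $t_1,\dots,t_m \in R$: $g(\phi(t_1),\dots,\phi(t_m)) = \phi(g(t_1,\dots,t_m))$, and since $g$ preserves $R$ (as $R \in \Gamma_2 \subseteq \Gamma$), $g(t_1,\dots,t_m) \in R$, so $\phi(g(t_1,\dots,t_m)) \in \phi(R)$, as required.

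Finally I would assemble: $g'$ preserves both $\Gamma_1$ and $\phi(\Gamma_2)$, hence preserves $\Gamma_1 \cup \phi(\Gamma_2)$, so $g' \in$ Pol($\Gamma_1 \cup \phi(\Gamma_2)$), which is exactly the inclusion Pol($\Gamma$) $\subseteq$ Pol($\Gamma_1 \cup \phi(\Gamma_2)$) in the sense defined in the preliminaries. The main obstacle — really the only subtle point — is the commutation identity $g \circ \phi = \phi \circ g$ on $D(\Gamma_2)$: it is what forces the newly introduced relation $R_\phi$ to do its job, and one has to be careful that it holds on all of $D(\Gamma_2)$ (which it does precisely because $R_\phi$ contains the row $(d,\phi(d))$ for \emph{every} $d \in D(\Gamma_2)$), and that the coordinatewise application to arbitrary tuples of $R$ is legitimate. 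Everything else is bookkeeping about restrictions to conservative subdomains.
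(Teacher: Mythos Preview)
Your proof is correct and follows essentially the same approach as the paper: both derive the commutation identity $g(\phi(d_1),\ldots,\phi(d_m)) = \phi(g(d_1,\ldots,d_m))$ from preservation of $R_\phi$, and then use it coordinatewise to conclude that $g$ preserves $\phi(\Gamma_2)$. You are simply more explicit about the conservative-subdomain and restriction bookkeeping that the paper's proof leaves implicit.
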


\begin{proof}
Let $f \in $ Pol($\Gamma$) of arity $a$. We only need to show that $f$ preserves $\phi(\Gamma_2)$, as $f$ already preserves $\Gamma_1$. Since $f$ preserves $R_\phi$, for each $(d_1,\phi(d_1)),\ldots,(d_a,\phi(d_a)) \in R_\phi$ we have $(f(d_1,\ldots,d_a), f(\phi(d_1),\ldots,\phi(d_a))) \in R_\phi$, so $f(\phi(d_1),\ldots,\phi(d_a)) = \phi(f(d_1,\ldots,d_a))$ for every $d_1,\ldots,d_a \in D(\Gamma_2)$. Then, given $a$ tuples $\phi(t_1),\ldots,\phi(t_a)$ of $\phi(\Gamma_2)$, $f(\phi(t_1),\ldots,\phi(t_a)) = \phi(f(t_1,\ldots,t_a)) \in \phi(\Gamma_2)$ since $f(t_1,\ldots,t_a) \in \Gamma_2$. Therefore, $f$ is a polymorphism of $\phi(\Gamma_2)$ and Pol($\Gamma$) $\subseteq$ Pol($\Gamma_1 \cup \phi(\Gamma_2)$).
\end{proof}

\begin{theorem}
\label{thm:nothelly}
On \csp s with arity at most $r$, if $T$ is a composite class that is
\begin{itemize}
\item idempotent
\item not $1$-Helly for constraints of arity at most $r$
\item value-renamable
\item domain-decomposable
\end{itemize}
then \stbackdoor\ is W[2]-hard when the parameter is $k$.
\end{theorem}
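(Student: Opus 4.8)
The plan is an FPT-reduction from $p$-\hset{} parameterized by the solution size $k$, which is W[2]-complete when $p$ is unbounded. The starting point is a witness to the failure of the $1$-Helly property: since $T$ is not $1$-Helly for arity at most $r$, there is a finite language $\Gamma^{\star}=\{R_1,\dots,R_l\}$, all of whose relations have arity at most $r$, with $\{R_j\}\in T$ for every $j$ but $\Gamma^{\star}\notin T$; picking such a $\Gamma^{\star}$ with a minimum number of relations, we may moreover assume that every proper sublanguage of $\Gamma^{\star}$ is in $T$. Since $\Gamma^{\star}$ depends only on $T$ and $r$, both $l$ and $r$ are constants throughout. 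I also use the two elementary facts recalled in Section~\ref{sec:idem}: single-tuple relations lie in every idempotent class and do not affect membership, and extensions of a relation preserve membership in idempotent composite classes.

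Given an instance $(p,U,S)$ with $S=\{S_1,\dots,S_s\}$, and after padding small sets with private dummy elements so that $p\ge 3$, I build a \csp{} instance $I$ of arity at most $r$ with one variable $x_u$ per $u\in U$ plus auxiliary variables, by attaching to each set $S_i$ a gadget $G_i$. Each $G_i$ lives on a fresh block of values $D_i$ (pairwise disjoint across $i$ and disjoint from the original values), carries a renamed copy $\phi_i(\Gamma^{\star})$ of the witness, and is wired to the variables $\{x_u:u\in S_i\}$ by a small network of ``propagation'' constraints which, because $|S_i|$ may exceed $r$, is laid out as a chain or bounded-degree tree so that every constraint still has arity at most $r$. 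Mimicking the block construction of Theorem~\ref{thm:w21cons} — but now reproducing an entire copy of $\Gamma^{\star}$ instead of a single $R_2^p$/$R_3^p$ block — the wiring is designed so that: (i) if no $x_u$ with $u\in S_i$ has been assigned, then the residual of $G_i$ still contains $\phi_i(\Gamma^{\star})$ and hence is not in $T$; and (ii) once some $x_u$ with $u\in S_i$ is assigned, $G_i$ degenerates into single-tuple relations together with extensions of proper sublanguages of $\Gamma^{\star}$, all of which are in $T$. Value-renamability, via Lemma~\ref{lem:valueren}, is what makes the fresh-block copies and the wiring relations legitimate; domain-decomposability is what lets me glue the $s$ gadgets together without creating spurious polymorphisms; idempotency takes care of the collapsed single-tuple relations.

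For correctness, if $H$ is a hitting set with $|H|\le k$ then $B=\{x_u:u\in H\}$ is a strong backdoor: after any complete assignment of $B$ every gadget is hit, so the residual language is, up to harmless single-tuple relations, a disjoint union over the blocks $D_i$ of languages in $T$, hence in $T$ by domain-decomposability. Conversely, if $B$ with $|B|\le k$ is a backdoor, then by (i) $B$ must ``reach'' every gadget $G_i$ — otherwise the corresponding assignment would leave $\phi_i(\Gamma^{\star})\notin T$ in the residual. Charging each variable of $B$ to an element of $U$ (the variable $x_u$ to $u$, and an auxiliary variable of $G_i$ to an element of $S_i$ matching the wire through which it reaches the copy $\phi_i(\Gamma^{\star})$), and using that auxiliary variables are gadget-private, this produces a set $H$ with $|H|\le|B|\le k$ that meets every $S_i$. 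The construction is polynomial and leaves the parameter unchanged, so this is an FPT-reduction, whence \stbackdoor{} is W[2]-hard for parameter $k$ on instances of arity at most $r$.

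The delicate part — where the construction has to be designed carefully rather than described abstractly — is the bounded-arity encoding of sets of unbounded size together with the final charging. One must lay out, from arity-at-most-$r$ constraints, a propagation network for $S_i$ such that assigning \emph{any one} $x_u$ with $u\in S_i$ forces the copy $\phi_i(\Gamma^{\star})$ to collapse into $T$, yet no assignment of auxiliary variables alone does this cheaply enough for a backdoor to avoid the shared variables $x_u$, yet the network does not collapse the copy spuriously, yet every auxiliary relation used (single-tuple relations, the value-renamed wiring relations, extensions of sublanguages of $\Gamma^{\star}$) genuinely does not affect membership in $T$. Reconciling these requirements — in particular choosing the fresh domains $D_i$ so that an assignment coming from a hitting set can leave at most one gadget intact, as in Theorem~\ref{thm:w21cons} — is the main obstacle; once the gadgets are in place, everything else is bookkeeping.
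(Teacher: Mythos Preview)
Your high-level plan is right --- FPT-reduction from $p$-\hset, witness language $\Gamma^\star$ with all proper sublanguages in $T$, one gadget per set on fresh disjoint domains, charging auxiliary variables back to elements of $S_i$ --- and this is exactly the paper's route. But the proposal stops short of the one idea that makes the bounded-arity version work, and your final paragraph essentially concedes this.

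The difficulty is that your properties (i) and (ii) are in tension in a way your sketch does not resolve. If the renamed copy $\phi_i(\Gamma^\star)$ sits on auxiliary variables (as it must, since $p$ can exceed $r$), then assigning some $x_u$ does not touch those relations at all: the relations of $\phi_i(\Gamma^\star)$ remain literally present in the residual language, so (ii) fails. Saying the gadget ``degenerates into extensions of proper sublanguages'' is not a mechanism; no amount of extra wiring can make a relation that does not involve $x_u$ change when $x_u$ is assigned. So as stated, (i) and (ii) cannot both hold.

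The paper's missing ingredient is to \emph{split} $\Gamma^\star$ rather than place it whole. One puts $\phi_j(R_1)$ on one block of auxiliary variables $y_j^1,\dots,y_j^{r_m}$ and $(\psi_j\circ\phi_j)(\Gamma^\star\setminus\{R_1\})$ on a second block $z_j^1,\dots,z_j^{r_m}$, over disjoint fresh domains, and then runs a \emph{chain of binary bijection constraints} $R_{\psi_j^i}$ from $y_j^{r_m}$ through $x_{\sigma_j(1)},\dots,x_{\sigma_j(p)}$ to $z_j^1$. When the chain is intact, iterating Lemma~\ref{lem:valueren} gives $\mathrm{Pol}(\Gamma)\subseteq\mathrm{Pol}((\psi_j\circ\phi_j)(\Gamma^\star))$, so the residual cannot be in $T$ --- note this is a polymorphism-inclusion argument, not your literal ``the residual still contains $\phi_i(\Gamma^\star)$'', which is false here. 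Assigning any $x_u$ on the chain breaks it into domain-disjoint pieces, one carrying only (a rename of) $\{R_1\}$ and the other only $\Gamma^\star\setminus\{R_1\}$, both proper sublanguages and hence in $T$; domain-decomposability then reassembles everything. Every constraint used is either binary ($R_\psi$) or of arity $r_m\le r$, so the arity bound holds regardless of $p$. This split-and-chain trick is the crux you are missing.
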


\begin{proof}
%Since $T$ is not $1$-Helly, there exists a minimal language (obtained by minimizing successively the arity, the domain size and the number of relations, in that order) $\Gamma_m = \{R_i \mid i \in 1..l_m\}$ over a domain $D_m$ such that every sublanguage is in $T$ but $\Gamma_m$ is not. $T$ is fixed, so $\Gamma_m$ has constant size. Its arity and domain size will be denoted by $r_m$ and $d_m$. By assumption, $r_m \leq r$ and $l_m > 1$.
Since $T$ is not $1$-Helly for constraints of arity at most $r$, there exists a language $\Gamma_m = \{R_i \mid i \in 1..l_m\}$ (of arity $r_m \leq r$ and over a domain $D_m, |D_m| = d_m$) such that $l_m > 1$ and every sublanguage of $\Gamma_m$ is in $T$ but $\Gamma_m$ is not. Since $T$ is fixed, we shall consider that $\Gamma_m$ is fixed as well and hence has constant size. We assume for simplicity of presentation that \changing{every $R \in \Gamma_m$ has arity $r_m$.}

%the arities of all relations in $\Gamma_m$ are all $r_m$.

We perform an FPT-reduction from $p$-\hset\ parameterized with solution size as follows. Let $(p,U,S)$ be an instance of $p$-\hset, with $S = \{S_1,\ldots,S_s\}$ and $U = \{u_1,\ldots,u_n\}$. For every $u_i \in U$, we associate a unique variable $x_i$. For every set $S_j = (u_{\sigma_j(1)},\ldots,u_{\sigma_j(p)})$, we add $2r_m$ new variables $y^1_j,\ldots,y^{r_m}_j,z^1_j,\ldots,z^{r_m}_j$ and we create $p+2$ new disjoint domains $D_j^i$, $i \in [0 \ldots p+1]$ of size $d_m$. Then, we pick a chain of $p+1$ bijections $\psi_j^i: D_j^{i} \rightarrow D_j^{i+1}$, $i \in [0 \ldots p]$ and we add a chain of constraints $R_{\psi_j^i}$ between the $p+2$ variables $(y^{r_m}_j,x_{\sigma_j(1)},\ldots,x_{\sigma_j(p)},z^1_j)$. Afterwards, we pick a bijection $\phi_j: D_m \rightarrow D_j^{0}$ and we apply $\phi_j(R_1)$ to $y^1_j,\ldots,y^{r_m}_j$. In the same fashion, if we denote by $\psi_j$ the bijection from $D_j^{0}$ to $D_j^{p+1}$ obtained by composition of all the $\psi_j^i$, we apply every constraint in $(\psi_j \circ \phi_j)(\Gamma_m \backslash \{R_1\})$ to the variables $z^1_j,\ldots,z^{r_m}_j$. The main idea behind the construction is that both $\Gamma_m \backslash \{R_1\}$ and $\{R_1\}$ are in $T$ but $\Gamma_m$ is not: by adding $\phi_j(R_1)$ on the variables $y$, $\psi_j \circ \phi_j(\Gamma_m \backslash \{R_1\})$ on the variables $z$ and the chain of bijections $R_{\psi_j^i}$ of the $x$ variables, we have a language that is not in $T$ but assigning any value to $x$ yields a residual language in $T$ (the proof can be found below). We use this property to encode a \hset\ instance. See Figure \ref{fig:nothelly} for an example of the reduction.

\begin{figure}[t]

\centering
\begin{tikzpicture}

\node[draw, circle] at (0,0) {$x_1$};
\node[draw, circle] at (1,0) {$x_2$};
\node[draw, circle] at (2,0) {$x_3$};
\node[draw, circle] at (3,0) {$x_4$};
\node[draw, circle] at (4,0) {$x_5$};
\node[draw, circle] at (5,0) {$x_6$};
\node[draw, circle] at (6,0) {$x_7$};

%S1
\node[draw, circle] at (-2,2) {$y^1_1$};
\node[draw, circle] at (-1,2) {$y^2_1$};
\draw[<-, >=latex] (-1,2.4) arc (30:150:0.55cm);
\draw[->, >=latex] (-0.7,1.7) -- (0.9,0.35);
\draw[<-, >=latex] (2.9,0.35) arc (52:128:1.4cm);
\draw[<-, >=latex] (3.95,0.35) arc (30:150:0.55cm);
\node[draw, circle] at (7,2) {$z^1_1$};
\node[draw, circle] at (8,2) {$z^2_1$};
\draw[->, >=latex] (4.1,0.35) -- (6.7,1.7);
\draw[<-, >=latex] (8,2.4) arc (30:150:0.55cm);
\draw[<-, >=latex] (8,1.6) arc (-30:-150:0.55cm);

%Constraints labels
\node[] at (-1.5, 3) {$\phi_1(R_1)$};
\node[] at (0.2, 1.4) {$R_{\psi_1^{0}}$};
\node[] at (2, 0.9) {$R_{\psi_1^1}$};
\node[] at (3.5, 0.9) {$R_{\psi_1^2}$};
\node[] at (5.3, 1.4) {$R_{\psi_1^{3}}$};
\node[] at (7.5, 3) {$(\psi_1 \circ \phi_1)(R_2)$};
\node[] at (7.5, 1) {$(\psi_1 \circ \phi_1)(R_3)$};

%S2
\node[draw, circle] at (-2,-2) {$y^1_2$};
\node[draw, circle] at (-1,-2) {$y^2_2$};
\draw[<-, >=latex] (-1,-1.6) arc (30:150:0.55cm);
\draw[->, >=latex] (-0.7,-1.7) -- (-0.1,-0.35);
\draw[<-, >=latex] (2.9,-0.35) arc (-60:-120:2.7cm);
\draw[<-, >=latex] (4.9,-0.35) arc (-52:-128:1.4cm);
\node[draw, circle] at (7,-2) {$z^1_2$};
\node[draw, circle] at (8,-2) {$z^2_2$};
\draw[->, >=latex] (5.1,-0.35) -- (6.7,-1.7);
\draw[<-, >=latex] (8,-2.4) arc (-30:-150:0.55cm);
\draw[<-, >=latex] (8,-1.6) arc (30:150:0.55cm);

%Constraints labels
\node[] at (-1.5, -1) {$\phi_2(R_1)$};
\node[] at (0, -1.3) {$R_{\psi_2^{0}}$};
\node[] at (1.5, -1.1) {$R_{\psi_2^1}$};
\node[] at (4, -1.1) {$R_{\psi_2^2}$};
\node[] at (5.8, -1.7) {$R_{\psi_2^{3}}$};
\node[] at (7.5, -3) {$(\psi_2 \circ \phi_2)(R_3)$};
\node[] at (7.5, -1) {$(\psi_2 \circ \phi_2)(R_2)$};

\end{tikzpicture}
\caption{Example of the construction for $U=(u_1,\ldots,u_7)$, two sets $S_1=(u_2,u_4,u_5)$, $S_2 = (u_1,u_4,u_6)$, $\Gamma_m = \{R_1,R_2,R_3\}$ and $r_m = 2$. Each arrow is a (binary) constraint. The upper part of the instance is constructed from $S_1$ and the lower part from $S_2$.}
\label{fig:nothelly}
\end{figure}
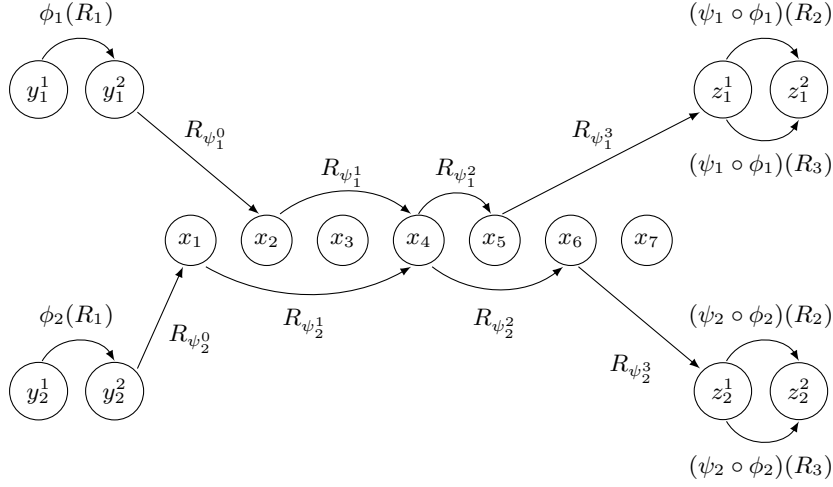

Suppose we have a backdoor to $T$ of size at most $k$. Then, for each set $S_j$, at least one variable from $(y^1_j,\ldots,y^{r_m}_j,x_{\sigma_j(1)},\ldots,x_{\sigma_j(p)},z^1_j,\ldots,z^{r_m}_j)$ must belong to the backdoor. Suppose this is not the case. Then, the language $\Gamma$ of any reduced instance would contain the relations of $\{\phi_j(R_1), (\psi_j \circ \phi_j)(\Gamma_m \backslash \{R_1\})\}$ plus the relations $R_{\psi_j^i}$. Applying Lemma \ref{lem:valueren} $p+1$ times, we get Pol($\Gamma$) $\subseteq$ Pol($(\psi_j \circ \phi_j)(R_1) \cup (\psi_j \circ \phi_j) (\Gamma_m \backslash \{R_1\})$) = Pol($(\psi_j \circ \phi_j)(\Gamma_m)$). Thus, if $\Gamma$ is in $T$, then so is $(\psi_j \circ \phi_j)(\Gamma_m)$. Then, by value-renamability $\{(\psi_j \circ \phi_j)(\Gamma_m) \cup R_{(\psi_j \circ \phi_j)^{-1}}\}$ is also in $T$ and using Lemma \ref{lem:valueren} again, $\Gamma_m$ is in $T$, which is a contradiction. Therefore, a hitting set of size at most $k$ can be constructed by including every value $u_i$ such that $x_i$ is in the backdoor, and if any variable from $y^1_j,\ldots,y^{r_m}_j,z^1_j,\ldots,z^{r_m}_j$ belongs to the backdoor for some $j$, we also include $u_{\sigma_j(1)}$.

Conversely, a hitting set forms a backdoor. After every complete assignment of the variables from the hitting set, the set of constraints associated with any set $S_j$ can be partitioned into sublanguages whose domains have an empty intersection(see Figure \ref{fig:nothelly}). The sublanguages are either:
\begin{itemize}
\item $\phi_j(R_1)$ together with some constraints $R_{\psi_j^i}$ and a residual unary constraint with a single tuple. This language is in $T$ by Lemma \ref{lem:valueren}, value-renamability and idempotency.
\item $(\psi_j \circ \phi_j)(\Gamma_m \backslash \{R_1\})$ together with some constraints $R_{\psi_j^i}$ and a residual unary constraint with a single tuple. This case is symmetric.
\item A (possibly empty) chain of constraints $R_{\psi_j^i}$ plus unary constraints with a single tuple, which is again in $T$ since $T$ is idempotent, value-renamable and contains the language $\{\emptyset\}$.
\end{itemize}
Furthermore, the sublanguages associated with different sets $S_j$ also have an empty domain intersection. Since $T$ is domain-decomposable, the resulting language is in $T$.\qed
\end{proof}

Note that this result does not conflict with Theorem \ref{thm:fpthelly}, since any class $T$ that is domain-decomposable, value-renamable and not $1$-Helly cannot have a finite Helly number (part of the proof of Theorem \ref{thm:nothelly} amounts to showing that one can build in polynomial time arbitrarily large languages $\Gamma$ such that every sublanguage is in $T$ but $\Gamma$ is not). While the proof may seem technical, the theorem is actually easy to use and applies almost immediately to many known tractable classes: to prove W[2]-hardness of \stbackdoor\ on \csp s of arity bounded by $r$, one only has to prove value-renamability, domain-decomposability (which is usually straightforward) and exhibit a language $\Gamma$ such that each $\{R\} \subset \Gamma$ is in $T$ but $\Gamma$ is not.

\begin{example}
An idempotent operation $f$ is totally symmetric (TSI) if it satisfies $f(x_1,\ldots,x_a) = f(y_1,\ldots,y_a)$ whenever $\{x_1,\ldots,x_a\} = \{y_1,\ldots,y_a\}$. \adding{Using the same notations as in Example \ref{exm:bw}}, it has been shown in~\cite{dalmau1999closure} that \changing{\csp($\overline{\Gamma}$)} is solved by arc-consistency if and only if $\Gamma$ has TSI polymorphisms of all arities. We show that this class of languages (which we denote by $T_{\text{TSI}}$) falls in the scope of Theorem~\ref{thm:nothelly} even for binary relations. First, this class is composite and idempotent: If we denote by TS($a$) the set of all possible TSI operations on $\mathbb{N}$ of arity $a$ and ATS = $\prod_{a \in \mathbb{N}^*}TS(a)$, we have
$T_{\text{TSI}} = \cup_{\mathcal{F} \in \text{ATS}} \left( \cap_{f \in \mathcal{F}} T_f \right)$. To prove domain-decomposability and value-renamability, we will use the equivalent and more convenient characterization that $\Gamma$ is in $T_{\text{TSI}}$ if and only if $\Gamma$ has a TSI of arity $|D(\Gamma)|$. \adding{Without loss of generality, we consider TSI polymorphisms as set functions and write $f(x_1,\ldots,x_a) = f(\{x_1,\ldots,x_a\})$.}
\begin{itemize}
\item Domain-decomposability: Let $\Gamma_1, \Gamma_2 \in T$ be constraint languages with respective TSI polymorphisms $f_1, f_2$ (of respective arities $|D(\Gamma_1)|, |D(\Gamma_2)|$), and $D(\Gamma_1) \cap D(\Gamma_2) = \emptyset$. Let $f$ be the operation on $D(\Gamma_1) \cup D(\Gamma_2)$ of arity $|D(\Gamma_1) \cup D(\Gamma_2)|$ defined as follows:
$$
f(x_1,\ldots,x_m)=\left\{
\begin{array}{c l}     
    f_1(\{x_1,\ldots,x_m\}) & \text{if } \{x_1,\ldots,x_m\} \subseteq D(\Gamma_1)\\
    f_2(\{x_1,\ldots,x_m\}) & \text{if } \{x_1,\ldots,x_m\} \subseteq D(\Gamma_2)\\
    \max(x_1,\ldots,x_m) & \text{otherwise}
\end{array}\right.
$$
$f$ is totally symmetric and preserves both $\Gamma_1$ and $\Gamma_2$, so $f$ is a polymorphism of $\Gamma_1 \cup \Gamma_2$ and $\Gamma_1 \cup \Gamma_2 \in T_{\text{TSI}}$. Therefore, $T_{\text{TSI}}$ is domain-decomposable.
\item Value-renamability: Let $\Gamma \in T_{\text{TSI}}$ be a language with a TSI polymorphism $f_1$ of arity $|D(\Gamma)|$. Let $\phi: D_1 \rightarrow D_2$ be a bijection, where $D_1$ is a conservative subdomain of $D(\Gamma)$ and $D_2 \cap D(\Gamma) = \emptyset$. Then, the operation of arity $|D(\Gamma) \cup D_2|$ defined as
$$
f(x_1,\ldots,x_m)=\left\{
\begin{array}{c l}     
    f_1(\{x_1,\ldots,x_m\}) & \text{if } \{x_1,\ldots,x_m\} \subseteq D(\Gamma)\\
    \phi(f_2(\{\phi^{-1}(x_1),\ldots,\phi^{-1}(x_m)\})) & \text{if } \{x_1,\ldots,x_m\} \subseteq D_2\\
    \max(x_1,\ldots,x_m) & \text{otherwise}
\end{array}\right.
$$
is a TSI and preserves both $\Gamma$ and $R_{\phi}$ (the proof is straightforward using the fact that $D_1$ is a conservative subdomain), so $\Gamma \cup \{R_\phi\} \in T_{\text{TSI}}$ and $T_{\text{TSI}}$ is value-renamable.
\item Not 1-Helly: Let $R_1 = [(0,0),(0,1),(1,0)]$ and $R_2 = [(1,1),(0,1),(1,0)]$. Both $\{R_1\}$ and $\{R_2\}$ are in $T_{\text{TSI}}$ (as they are respectively closed by min and max, which are 2-ary TSIs), but $\{R_1,R_2\}$ is not ($R_1$ forces $f(0,1) = 0$ and $R_2$ forces $f(0,1) = 1$ for every TSI polymorphism $f$), so $T_{\text{TSI}}$ is not $1$-Helly.
\end{itemize}
Finally, we conclude that Theorem \ref{thm:nothelly} applies to $T_{\text{TSI}}$ even for binary constraints. The same reasoning also applies to many other tractable \changing{classes}, such as languages preserved by a near-unanimity ($f(y,x,\ldots,x) = f(x,y,x,\ldots,x) = \ldots = f(x,\ldots,x,y) = x$) or a Mal'tsev ($f(x,x,y) = f(y,x,x) = y$) polymorphism.
\end{example}

\section{Related work}

A very recent paper by Gaspers et al.~\cite{Gaspers} has \adding{independently} investigated the same topic (parameterized complexity of strong backdoor detection for tractable semantic classes) and some of their results seem close to ours. In particular, one of their theorems (Theorem 5) is similar to our Proposition \ref{prp:w2bool}, but is less general as they assume the target class to be an union of atomic classes. They also \changing{study} the case where $r$ is bounded and $k$ is the parameter, as we do, but their result (Theorem 6) is more specific and can be shown to be implied by our Theorem \ref{thm:nothelly}.\

\section{Conclusion}

We have shown that finding small strong backdoors to tractable constraint languages is often hard. In particular, if the tractable class is a set of languages closed by an idempotent operation, or can be defined by arbitrary conjunctions and disjunctions of such languages, then finding a backdoor to this class is NP-hard even when all constraints have a fixed arity. Moreover, it is W[2]-hard with respect to the backdoor size $k$.

When considering the larger parameter $k+r$, however, we have shown that strong backdoor detection is FPT provided that the target class 
is $h$-Helly for a constant $h$, that is, membership in this class can be decided by checking all $h$-tuples of relations. We then give a complete characterization of $1$-Helly classes, and we use this result to show that any finite union of $1$-Helly classes induces a backdoor problem FPT in $k+r$.
%has a constant Helly number.
%
Finally, we characterize another large family of tractable classes for which backdoor detection is W[2]-hard for the parameter $k$ even if $r$ is fixed. This result can be used to derive hardness of backdoor detection for many known large tractable classes, provided they have certain natural properties (which we call value-renamability and domain-decomposability).

\newpage

\bibliography{bibfile}

\begin{thebibliography}{10}

\bibitem{backdoors}
Williams, R., Gomes, C.P., Selman, B.:
\newblock Backdoors to typical case complexity.
\newblock In: Proceedings of the 18th International Joint Conference on
  Artificial Intelligence. IJCAI'03, San Francisco, CA, USA, Morgan Kaufmann
  Publishers Inc. (2003)  1173--1178

\bibitem{conf/sat/NishimuraRS04}
Nishimura, N., Ragde, P., Szeider, S.:
\newblock {Detecting Backdoor Sets with Respect to Horn and Binary Clauses}.
\newblock In: SAT. (2004)

\bibitem{Gaspers}
Gaspers, S., Misra, N., Ordyniak, S., Szeider, S., \v{Z}ivn\'y, S.:
\newblock Backdoors into heterogeneous classes of sat and csp.
\newblock In: Proceedings of the 28th AAAI Conference on Artificial
  Intelligence. AAAI '14 (2014)

\bibitem{conf/focs/GaspersS13}
Gaspers, S., Szeider, S.:
\newblock {Strong Backdoors to Bounded Treewidth SAT}.
\newblock In: FOCS. (2013)  489--498

\bibitem{feder1998computational}
Feder, T., Vardi, M.Y.:
\newblock The computational structure of monotone monadic snp and constraint
  satisfaction: A study through datalog and group theory.
\newblock SIAM Journal on Computing \textbf{28}(1) (1998)  57--104

\bibitem{Bulatov:2011:CCC:1970398.1970400}
Bulatov, A.A.:
\newblock Complexity of conservative constraint satisfaction problems.
\newblock ACM Trans. Comput. Logic \textbf{12}(4) (July 2011)  24:1--24:66

\bibitem{journals/ijac/BartoB13}
Barto, L., Bulin, J.:
\newblock Csp dichotomy for special polyads.
\newblock IJAC \textbf{23}(5) (2013)  1151--1174

\bibitem{Barto:2014:CSP:2578041.2556646}
Barto, L., Kozik, M.:
\newblock Constraint satisfaction problems solvable by local consistency
  methods.
\newblock J. ACM \textbf{61}(1) (January 2014)  3:1--3:19

\bibitem{closure}
Jeavons, P., Cohen, D., Gyssens, M.:
\newblock Closure properties of constraints.
\newblock J. ACM \textbf{44}(4) (July 1997)  527--548

\bibitem{Jeavons98consistency}
Jeavons, P., Cohen, D., Cooper, M.:
\newblock Constraints, consistency, and closure.
\newblock Artificial Intelligence \textbf{101} (1998)  101--1

\bibitem{idziak2010tractability}
Idziak, P., Markovic, P., McKenzie, R., Valeriote, M., Willard, R.:
\newblock Tractability and learnability arising from algebras with few
  subpowers.
\newblock SIAM Journal on Computing \textbf{39}(7) (2010)  3023--3037

\bibitem{2sml}
Bulatov, A.A.:
\newblock Combinatorial problems raised from 2-semilattices.
\newblock Journal of Algebra \textbf{298}(2) (2006)  321 -- 339

\bibitem{dalmau1999closure}
Dalmau, V., Pearson, J.:
\newblock Closure functions and width 1 problems.
\newblock In: Principles and Practice of Constraint Programming--CP’99,
  Springer (1999)  159--173

\bibitem{Jeavons95tractableconstraints}
Jeavons, P., Cooper, M.:
\newblock Tractable constraints on ordered domains.
\newblock Artificial Intelligence \textbf{79} (1995)  327--339

\bibitem{barto2014constraint}
Barto, L., Kozik, M.:
\newblock Constraint satisfaction problems solvable by local consistency
  methods.
\newblock Journal of the ACM (JACM) \textbf{61}(1) (2014) ~3

\bibitem{kozik2013characterizations}
Kozik, M., Krokhin, A., Valeriote, M., Willard, R.:
\newblock Characterizations of several maltsev conditions.
\newblock preprint (2013)

\bibitem{bcsat}
Dilkina, B., Gomes, C.P., Sabharwal, A.:
\newblock Tradeoffs in the complexity of backdoors to satisfiability: dynamic
  sub-solvers and learning during search.
\newblock Annals of Mathematics and Artificial Intelligence (2014)  1--33

\bibitem{partition}
Bessiere, C., Carbonnel, C., Hebrard, E., Katsirelos, G., Walsh, T.:
\newblock Detecting and exploiting subproblem tractability.
\newblock In: International Joint Conference on Artificial Intelligence
  (IJCAI), Beijing, China (August 2013)

\bibitem{vcovfpt}
Chen, J., Kanj, I.A., Xia, G.:
\newblock Improved upper bounds for vertex cover.
\newblock Theor. Comput. Sci. \textbf{411}(40-42) (September 2010)  3736--3756

\bibitem{ParameterizedComplexity}
Downey, R.G., Fellows, M.R.:
\newblock Parameterized Complexity.
\newblock Springer-Verlag (1999) 530 pp.

\bibitem{karp}
Karp, R.M.:
\newblock Reducibility among combinatorial problems.
\newblock In: Complexity of Computer Computations. (1972)  85--103

\end{thebibliography}
\bibliographystyle{splncs}

\end{document}